\DeclareMathOperator*{\E}{\mathbb E}
\DeclareMathOperator*{\argmax}{\rm argmax}
\DeclareMathOperator*{\argmin}{\rm argmin}
\DeclareMathOperator{\sgn}{sgn}
\DeclareMathOperator{\conv}{\rm conv}
\DeclareMathOperator{\Margin}{\rm Margin}
\DeclareMathOperator{\prox}{\rm prox}
\newcommand{\Rset}{\mathbb R}
\newcommand{\h}{\widehat}
\newcommand{\tl}{\widetilde}
\newcommand{\ov}{\overline}
\newcommand{\R}{\mathfrak R}
\newcommand{\sX}{{\mathscr X}}
\newcommand{\sD}{{\mathscr D}}
\newcommand{\sF}{{\mathscr F}}
\newcommand{\sG}{{\mathscr G}}
\newcommand{\sH}{{\mathscr H}}
\newcommand{\bE}{\mathbb{E}}
\newcommand{\cH}{\mathcal H}
\newcommand{\cO}{\mathcal O}
\newcommand{\cX}{\mathcal X}
\newcommand{\be}{\boldsymbol \epsilon}
\newcommand{\bz}{\mathbf z}
\newcommand{\bw}{\mathbf w}
\newcommand{\bu}{\mathbf u}
\newcommand{\bh}{\mathbf h}
\newcommand{\bzero}{\mathbf 0}
\newcommand{\bPsi}{\boldsymbol \Psi}
\newcommand{\ssigma}{{\boldsymbol \sigma}}
\newcommand{\EQ}{\gets}
\newcommand{\TO}{\mbox{ {\bf to }}}
\newcommand{\eps}{\epsilon}
\newcommand{\e}{\epsilon}
\newcommand{\set}[2][]{#1 \{ #2 #1 \} }
\newcommand{\ignore}[1]{}
\newtheorem*{rep@theorem}{\rep@title}
\newcommand{\newreptheorem}[2]{%
\newenvironment{rep#1}[1]{%
 \def\rep@title{#2 \ref{##1}}%
 \begin{rep@theorem}}%
 {\end{rep@theorem}}}
\newtheorem{lemma}{Lemma}
\newtheorem{theorem}{Theorem}
\newtheorem{corollary}{Corollary}
\icmltitlerunning{AdaNet: Adaptive Structural Learning of Artificial Neural Networks}
\begin{document} 

\twocolumn[
\icmltitle{AdaNet: Adaptive Structural Learning of Artificial Neural Networks}

\begin{icmlauthorlist}
\icmlauthor{Corinna Cortes}{google}
\icmlauthor{Xavier Gonzalvo}{google}
\icmlauthor{Vitaly Kuznetsov}{google}
\icmlauthor{Mehryar Mohri}{cims,google}
\icmlauthor{Scott Yang}{cims}
\end{icmlauthorlist}

\icmlaffiliation{cims}{Courant Institute, New York, NY, USA}
\icmlaffiliation{google}{Google Research, New York, NY, USA}

\icmlcorrespondingauthor{Vitaly Kuznetsov}{vitalyk@google.com}

\icmlkeywords{Neural Networks, Ensemble Methods, Learning Theory}

\vskip 0.3in
]

\printAffiliationsAndNotice{}

\begin{abstract} 
  We present new algorithms for adaptively learning artificial neural
  networks.  Our algorithms (\textsc{AdaNet}) adaptively learn both
  the structure of the network and its weights. They are based on a
  solid theoretical analysis, including data-dependent generalization
  guarantees that we prove and discuss in detail. We report the
  results of large-scale experiments with one of our algorithms on
  several binary classification tasks extracted from the CIFAR-10
  dataset. The results demonstrate that our algorithm can
  automatically learn network structures with very competitive
  performance accuracies when compared with those achieved for neural
  networks found by standard approaches.

\ignore{
  We present a new framework for analyzing and learning artificial
  neural networks. Our approach simultaneously and adaptively learns
  both the structure of the network as well as its weights. The
  methodology is based upon and accompanied by strong data-dependent
  theoretical learning guarantees, so that the final network
  architecture provably adapts to the complexity of any given
  problem. We validate our framework with experimental data on
  real-life dataset which demonstrate that our algorithm can
  automatically find network structures with accuracies competitive to
  networks found by traditional approaches.
}
\end{abstract} 

\section{Introduction}
\label{sec:intro}

Deep neural networks form a powerful framework for machine learning
and have achieved a remarkable performance in several areas in recent
years.  Representing the input through increasingly more abstract
layers of feature representation has shown to be extremely effective
in areas such as natural language processing, image captioning, speech
recognition and many others
\citep{KrizhevskySutskeverHinton2012,SutskeverVinyalsLe2014}.\ignore{
  The concept of multilayer feature representations and modeling
  machine learning problems using a network of neurons is also
  motivated and guided by studies of the brain, neurological behavior,
  and cognition.}  However, despite the compelling arguments for using
neural networks as a general template for solving machine learning
problems, training these models and designing the right network for a
given task has been filled with many theoretical gaps and practical
concerns.

To train a neural network, one needs to specify the parameters of a
typically large network architecture with several layers and units,
and then solve a difficult non-convex optimization problem. From an
optimization perspective, there is no guarantee of optimality for a
model obtained in this way, and often, one needs to implement ad hoc
methods (e.g. gradient clipping or batch normalization
\citep{PascanuMikolovBengio2013,IoffeSzegedy15}) to derive a coherent
models.

Moreover, if a network architecture is specified a priori and trained
using back-propagation, the model will always have as many layers as
the one specified because there needs to be at least one path through
the network in order for the hypothesis to be non-trivial. While
single weights may be pruned \citep{HanPoolTranDally}, a technique
originally termed Optimal Brain Damage \citep{LeCunDenkerSolla}, the
architecture itself is unchanged.  This imposes a stringent lower
bound on the complexity of the model. Since not all machine learning
problems admit the same level of difficulty and different tasks
naturally require varying levels of complexity, complex models trained
with insufficient data can be prone to overfitting. This places a
burden on a practitioner to specify an architecture at the right level
of complexity which is often hard and requires significant levels of
experience and domain knowledge.  For this reason, network
architecture is often treated as a hyperparameter which is tuned using
a validation set. The search space can quickly become exorbitantly
large \citep{SzegedyEtal2015,HeZhangRenSun2015} and large-scale
hyperparameter tuning to find an effective network architecture is
wasteful of data, time, and resources (e.g. grid search, random search
\citep{BergstraBardenetBengioKegl2011}).

In this paper, we attempt to remedy some of these issues.  In
particular, we provide a theoretical analysis of a supervised learning
scenario in which the network architecture and parameters are learned
simultaneously. To the best of our knowledge, our results are the
first generalization bounds for the problem of \emph{structural
  learning of neural networks}.  These general guarantees can guide
the design of a variety of different algorithms for learning in this
setting.  We describe in depth two such algorithms that directly
benefit from the theory that we develop.

In contrast to enforcing a pre-specified architecture and a
corresponding fixed complexity, our algorithms learn the requisite
model complexity for a machine learning problem in an \emph{adaptive}
fashion. Starting from a simple linear model, we add more units and
additional layers as needed. \ignore{From the cognitive perspective,
  we will adapt the neural complexity and architecture to the
  difficulty of the problem. } The additional units that we add are
carefully selected and penalized according to rigorous estimates from
the theory of statistical learning.\ignore{This will serve as a
  catalyst for the sparsity of our model as well as the strong
  generalization bonds that we will be able to derive. The first
  algorithm that we present learns traditional feedforward
  architectures. The second algorithm that we develop steps outside of
  this class of models and can learn more exotic structures.}
Remarkably, optimization problems for both of our algorithms turn out
to be strongly convex and hence are guaranteed to have a unique global
solution which is in stark contrast with other methodologies for
training neural networks.

The paper is organized as follows.  In
Appendix~\ref{sec:related_work}, we give a detailed discussion of
previous work related to this topic.
Section~\ref{sec:structural_learning} describes the broad network
architecture and therefore hypothesis set that we
consider. Section~\ref{sec:scenario} provides a formal description of
our learning scenario.  In Section~\ref{sec:bounds}, we prove strong
generalization guarantees for learning in this setting which guide the
design of the algorithm described in Section~\ref{sec:algorithm} as
well as a variant described in Appendix~\ref{sec:alternatives}.  We
conclude with experimental results in Section~\ref{sec:experiments}.

\ignore{
Accepting
the general structure of a neural network as an effective parameterized
model for supervised learning, we provide a theoretical analysis of this 
model and proceed to derive an algorithm benefitting from that theory.
In the process, we introduce a framework for training neural networks that:
\begin{enumerate}
    \item uses a stable and robust algorithm with a unique solution.  
    \item can produce much sparser and/or shallower networks compared to existing methods.
    \item adapts the structure and complexity of the network to the difficulty of the particular problem at hand, with no pre-defined architecture.
    \item is accompanied and in fact motivated by strong data-dependent generalization
      bounds, validating their adaptivity and statistical efficacy.
    \item is intuitive from the cognitive standpoint that originally
      motivated neural network architectures.
\end{enumerate}}

\section{Network architecture}
\label{sec:structural_learning}

In this section, we describe the general network architecture we
consider for feedforward neural networks, thereby also defining 
our hypothesis set. To simplify the presentation, we restrict our attention to the case of
binary classification. However, all our results can be
straightforwardly extended to multi-class classification, including
the network architecture by augmenting the number of output units, and
our generalization guarantees by using existing multi-class
counterparts of the binary classification ensemble margin bounds we
use.

A common model for feedforward neural networks is the multi-layer
architecture where units in each layer are only connected to those
in the layer below.  We will consider more general architectures where
a unit can be connected to units in any of the layers below, as
illustrated by Figure~\ref{fig:arch}. In particular, the output unit
in our network architectures can be connected to any other unit. These
more general architectures include as special cases standard
multi-layer networks (by zeroing out appropriate connections) as well
as somewhat more exotic ones
\citep{HeZhangRenSun2015,HuangLiuWeinberger2016}.

\begin{figure}[t] 
\centering
\includegraphics[scale=.35]{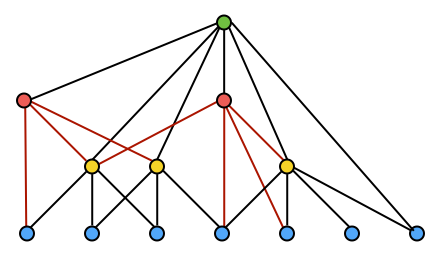}
\caption{An example of a general network architecture: output layer (green) is
connected to all of the hidden units as well as some input units. Some hidden
units (red and yellow) are connected not only to the units in the layer
directly below but to units at other levels as well.}
\vskip -.15in
\label{fig:arch}
\end{figure}

More formally, the artificial neural networks we consider are defined
as follows.  Let $l$ denote the number of intermediate layers in the
network and $n_k$ the maximum number of units in layer $k \in [l]$.
Each unit $j \in [n_k]$ in layer $k$ represents a function denoted by
$h_{k, j}$ (before composition with an activation function).  Let
$\sX$ denote the input space and for any $x \in \cX$, let
$\bPsi(x) \in \Rset^{n_0}$ denote the corresponding feature vector.
Then, the family of functions defined by the first layer functions
$h_{1, j}$, $j \in [n_1]$, is the following:
\begin{equation}
\label{eq:H1}
\sH_{1} = \set[\Big]{x \mapsto \bu \cdot \bPsi(x) 
\colon \bu \in \Rset^{n_0}, \| \bu \|_p \leq \Lambda_{1, 0}},
\end{equation}
where $p \geq 1$ defines an $l_p$-norm and $\Lambda_{1, 0} \geq 0$ is
a hyperparameter on the weights connecting layer 0 and layer 1. The family of functions $h_{k, j}$, $j \in [n_k]$,
in a higher layer $k > 1$ is then defined as follows:
\begin{align}
\label{eq:Hk}
\sH_{k} & = \set[\bigg]{x \mapsto \sum_{s = 1}^{k-1} \bu_s
  \cdot (\varphi_{s}\circ \bh_s)(x)
  \colon \nonumber \\
  &\quad\quad \bu_s \in \Rset^{n_s}, \| \bu_s \|_p \leq \Lambda_{k,s}, h_{k,s} \in
  \sH_{s}},
\end{align}
 where for each unit function
$h_{k, s}$, $\bu_s$ in \eqref{eq:Hk} denotes the vector of weights for
connections from that unit to a lower layer $s < k$. The $\Lambda_{k,
  s}$s are non-negative hyperparameters and 
$\varphi_{s}\circ \bh_s$ abusively denotes a coordinate-wise
composition:
$\varphi_{s}\circ \bh_s = (\varphi_s \circ h_{s,1}, \ldots, \varphi_s
\circ h_{s,n_s})$.  The $\varphi_s$s are assumed to be $1$-Lipschitz
activation functions.  In particular, they can be chosen to be the
\emph{Rectified Linear Unit function} (ReLU function)
$x \mapsto \max \set{0, x}$, or the \emph{sigmoid function}
$x \mapsto \frac{1}{1 + e^{-x}}$. The choice of the parameter $p \geq
1$ determines the sparsity of the network and the complexity of
the hypothesis sets $\sH_k$.

For the networks we consider, the output
unit can be connected to all intermediate units, which therefore
defines a function $f$ as follows:
\begin{equation}
\label{eq:output}
  f = \sum_{k = 1}^{l} \sum_{j = 1}^{n_k} w_{k, j} h_{k, j} =
  \sum_{k = 1}^l \bw_k \cdot \bh_k,
\end{equation}
where
$\bh_k = [h_{k,1}, \ldots, h_{k, n_k}]^\top \mspace{-8mu} \in
\sH_k^{n_k}$ and $\bw_k \in \Rset^{n_k}$ is the vector of connection
weights to units of layer $k$.  Observe that for
$\bu_s = 0$ for $s < k - 1$ and $\bw_k = 0$ for $k < l$, our
architectures coincides with standard multi-layer feedforward ones.

We will denote by $\sF$ the family of functions $f$ defined by
\eqref{eq:output} with the absolute value of the weights summing to
one:
\begin{equation*}
\sF = \set[\Bigg]{\sum_{k = 1}^l \bw_k \cdot \bh_k \colon \bh_k \in
  \sH_k^{n_k}, \sum_{k = 1}^{l}  \| \bw_k \|_1 = 1}.
\end{equation*}
Let $\tl \sH_k$ denote the union of $\sH_k$ and its reflection,
$\tl \sH_k = \sH_k \cup (-\sH_k)$, and let $\sH$ denote the union of
the families $\tl \sH_k$: $\sH = \bigcup_{k = 1}^l \tl \sH_k$.
Then, $\sF$ coincides with the convex hull of $\sH$:
$\sF = \conv(\sH)$.

For any $k \in [l]$ we will also consider the family $\sH^*_k$ derived
from $\sH_k$ by setting $\Lambda_{k, s} = 0$ for $s < k - 1$, which
corresponds to units connected only to the layer below.  We similarly
define $\tl \sH^*_k = \sH^*_k \cup (-\sH^*_k)$ and
$\sH^* = \cup_{k = 1}^l \sH^*_k$, and define the $\sF^*$ as the convex
hull $\sF^* = \conv(\sH^*)$.  Note that the architecture corresponding
to the family of functions $\sF^*$ is still more general than 
standard feedforward neural network architectures since the output
unit can be connected to units in different layers.

\section{Learning problem}
\label{sec:scenario}

We consider the standard supervised learning scenario and assume that
training and test points are drawn i.i.d.\ according to some
distribution $\sD$ over $\sX \times \set{-1, +1}$ and denote by
$S = ((x_1, y_1), \ldots, (x_m, y_m))$ a training sample of size $m$
drawn according to $\sD^m$.

For a function $f$ taking values in $\Rset$, we denote by
$R(f) = \bE_{(x,y)\sim \sD}[1_{yf(x) \leq 0}]$ its generalization
error and, for any $\rho > 0$, by $\h R_{S, \rho}(f)$ its empirical
margin error on the sample $S$:
$\h R_{S, \rho}(f) = \frac{1}{m} \sum_{i = 1}^m 1_{y_i f(x_i) \leq
  \rho}$.

The learning problem consists of using the training sample $S$ to
determine a function $f$ defined by \eqref{eq:output} with small
generalization error $R(f)$.  For an accurate predictor $f$, we expect
many of the weights to be zero and the corresponding architecture
to be quite sparse, with fewer than $n_k$ units at layer $k$ and relatively
few non-zero connections. In that sense, learning an accurate
function $f$ implies also learning the underlying architecture.

In the next section, we present data-dependent learning bounds for
this problem that will help guide the design of our algorithm.

\section{Generalization bounds}
\label{sec:bounds}

Our learning bounds are expressed in terms of the Rademacher
complexities of the hypothesis sets $\sH_k$. The empirical Rademacher
complexity of a hypothesis set $\sG$ for a sample $S$ is denoted by $\h \R_S(\sG)$
and defined as follows:
\begin{equation*}
\h \R_S(\sG) = \frac{1}{m}\E_{\ssigma}\left[ \sup_{h \in \sG}
  \sum_{i = 1}^m \sigma_i h(x_i) \right],
\end{equation*}
where $\ssigma = (\sigma_1, \ldots, \sigma_m)$, with $\sigma_i$s
independent uniformly distributed random variables taking values in
$\set{-1, +1}$.  Its Rademacher complexity is defined by $\R_m(\sG) = \E_{S \sim \sD^m} [\h \R_S(\sG)]$. These
are data-dependent complexity measures that lead to finer learning
guarantees \citep{KoltchinskiiPanchenko2002,BartlettMendelson2002}.

As pointed out earlier, the family of functions $\sF$ is the convex
hull of $\sH$. Thus, generalization bounds for ensemble methods can be
used to analyze learning with $\sF$. In particular, we can leverage
the recent margin-based learning guarantees of
\citet{CortesMohriSyed2014}, which are finer than those that can be
derived via a standard Rademacher complexity analysis
\citep{KoltchinskiiPanchenko2002}, and which admit an explicit
dependency on the mixture weights $\bw_k$ defining the
ensemble function $f$. That leads to the following learning
guarantee.

\begin{theorem}[Learning bound]
\label{th:adanet}
Fix $\rho > 0$.  Then, for any $\delta > 0$, with probability at least
$1 - \delta$ over the draw of a sample $S$ of size $m$ from $\sD^m$,
the following inequality holds for all
$f = \sum_{k = 1}^l \bw_k \cdot \bh_k \in \sF$:
\begin{align*}
R(f) 
& \leq \widehat{R}_{S, \rho}(f) + \frac{4}{\rho} \sum_{k = 1}^{l} \big \| \bw
  _k \big \|_1 \R_m(\tl \sH_k) + \frac{2}{\rho} \sqrt{\frac{\log l}{m}}\\
& \quad + C(\rho, l, m, \delta),   
\end{align*}
where
$C(\rho, l, m, \delta) = \sqrt{\big\lceil \frac{4}{\rho^2}
  \log(\frac{\rho^2 m}{\log l}) \big\rceil \frac{\log l}{m} +
  \frac{\log(\frac{2}{\delta})}{2m}} = \tl O\Big(\frac{1}{\rho}
\sqrt{\frac{\log l}{m}}\Big)$.
\end{theorem}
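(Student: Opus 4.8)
The plan is to derive the bound as a direct instantiation of the ensemble margin guarantee of \citet{CortesMohriSyed2014}, exploiting the fact already recorded above that $\sF = \conv(\sH)$ with $\sH = \bigcup_{k=1}^l \tl \sH_k$. That result applies to any convex combination of base hypotheses drawn from a union of finitely many families, and its data-dependent complexity term weights each family's Rademacher complexity by the \emph{total mixture mass} placed on it. So the whole argument reduces to setting up the correspondence between the neural-network ensemble $f = \sum_{k=1}^l \bw_k \cdot \bh_k$ and the abstract ensemble setting, and then reading off the three error terms.

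First I would rewrite $f$ as a genuine convex combination over $\sH$. Since $f = \sum_{k=1}^l \sum_{j=1}^{n_k} w_{k,j}\, h_{k,j}$ with $h_{k,j} \in \sH_k$ and $\sum_{k=1}^{l} \| \bw_k \|_1 = 1$, I would absorb the sign of each coefficient into the hypothesis by using the reflected family $\tl \sH_k = \sH_k \cup (-\sH_k)$: replacing $w_{k,j} h_{k,j}$ by $|w_{k,j}|\, \big(\sgn(w_{k,j})\, h_{k,j}\big)$ expresses $f$ as $\sum_{k,j} |w_{k,j}|\, g_{k,j}$ with each $g_{k,j} \in \tl \sH_k$ and nonnegative coefficients summing to one. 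Crucially, the total coefficient mass landing in the $k$-th family is exactly $\sum_{j} |w_{k,j}| = \| \bw_k \|_1$.

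Next I would invoke the cited guarantee with the $l$ base families $\tl \sH_1, \ldots, \tl \sH_l$. Its complexity term has the form $\frac{4}{\rho} \sum_t \alpha_t\, \R_m(\text{family of } h_t)$; since every element of $\tl \sH_k$ shares the same Rademacher complexity $\R_m(\tl \sH_k)$, the contributions over the $j$-index collapse and this term becomes $\frac{4}{\rho} \sum_{k=1}^l \| \bw_k \|_1\, \R_m(\tl \sH_k)$, matching the statement. The two remaining terms, $\frac{2}{\rho}\sqrt{\frac{\log l}{m}}$ and $C(\rho, l, m, \delta)$, are precisely what the cited bound produces when the number of families is $l$, and require no modification.

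The only genuine content beyond this bookkeeping lives inside the cited result itself, which I would otherwise reprove by a margin-based Rademacher argument applied not to $\conv(\sH)$ as a whole---that would replace the weighted average $\sum_k \| \bw_k \|_1 \R_m(\tl \sH_k)$ by the larger quantity governed by $\max_k \R_m(\tl \sH_k)$, losing the adaptivity to the weights---but to each admissible allocation of mixture mass across the $l$ families, followed by a union bound over a sufficiently fine discretization of those allocations. This is where the subtlety sits: the grid must be fine enough that the rounding error in the margin is controlled, yet coarse enough that the number of grid points stays sub-exponential, and it is this counting that produces the $\log l$ factor together with the $\big\lceil \frac{4}{\rho^2} \log(\frac{\rho^2 m}{\log l}) \big\rceil$ term in $C$. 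I expect that balancing step in the underlying theorem, rather than the reduction, to be the main obstacle; but since the excerpt grants us \citet{CortesMohriSyed2014}, the correspondence above suffices.
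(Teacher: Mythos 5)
Your proposal is correct and follows essentially the same route as the paper: the paper's entire proof is a one-line invocation of the DeepBoost margin bound of \citet{CortesMohriSyed2014} (restated in its appendix) applied to the decomposition $\sH = \bigcup_{k=1}^l \tl \sH_k$ with $\sF = \conv(\sH)$. Your additional bookkeeping---absorbing signs into the reflected families $\tl \sH_k$ so that the coefficients become nonnegative and the mass on family $k$ is exactly $\| \bw_k \|_1$---is precisely the correspondence the paper leaves implicit, and your closing sketch of how the cited theorem itself is proved is not needed since that result is taken as given.
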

The proof of this result, as well as that of all other main theorems
are given in Appendix~\ref{app:theory}.  The bound of the theorem can
be generalized to hold uniformly for all $\rho \in (0, 1]$, at the
price of an additional term of the form
$\sqrt{\log \log_2 (2/\rho)/m}$ using standard
techniques \citep{KoltchinskiiPanchenko2002}.

Observe that the bound of the theorem depends only logarithmically on
the depth of the network $l$. But, perhaps more remarkably, the
complexity term of the bound is a $\| \bw_k \|_1$-weighted average of
the complexities of the layer hypothesis sets $\sH_k$, where the
weights are precisely those defining the network, or the function $f$.
This suggests that a function $f$ with a small empirical margin error
and a deep architecture benefits nevertheless from a strong
generalization guarantee, if it allocates more weights to lower layer
units and less to higher ones. Of course, when the weights are sparse,
that will imply an architecture with relatively fewer units or
connections at higher layers than at lower ones. The bound of the theorem
further gives a quantitative guide for apportioning the weights
depending on the Rademacher complexities of the layer hypothesis sets.

This data-dependent learning guarantee will serve as a foundation for
the design of our structural learning algorithms in
Section~\ref{sec:algorithm} and Appendix~\ref{sec:alternatives}.
However, to fully exploit it, the Rademacher complexity measures need
to be made explicit. One advantage of these data-dependent measures is
that they can be estimated from data, which can lead to more
informative bounds. Alternatively, we can derive useful upper bounds
for these measures which can be more conveniently used in our
algorithms. The next results in this section provide precisely such
upper bounds, thereby leading to a more explicit generalization bound.

We will denote by $q$ the conjugate of $p$, that is
$\frac{1}{p} + \frac{1}{q} = 1$, and define
$r_\infty = \max_{i \in[1, m]} \| \bPsi(x_i)\|_\infty$.

Our first result gives an upper bound on the Rademacher
complexity of $\sH_k$ in terms of the Rademacher
complexity of other layer families.
\begin{lemma}
\label{lemma:Rad_Hk_Hk-1}
For any $k > 1$, the empirical Rademacher complexity of $\sH_k$ for a
sample $S$ of size $m$ can be upper-bounded as follows in terms of
those of $\sH_s$s with $s < k$:
\begin{equation*}
\h \R_S(\sH_k) 
\leq 2 \sum_{s=1}^{k-1} \Lambda_{k,s}
n_{s}^{\frac{1}{q}} \h \R_S(\sH_{s}).
\end{equation*}
\end{lemma}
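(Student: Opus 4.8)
The plan is to expand the definition of $\h\R_S(\sH_k)$ directly from \eqref{eq:Hk}, exploit the fact that the weight and sub-hypothesis constraints decouple across the layers $s < k$, and then peel off in turn the mixture weights, the coordinate structure, and finally the activation functions. First I would write
$$m\,\h\R_S(\sH_k) = \E_{\ssigma}\Big[\sup \sum_{s=1}^{k-1}\bu_s\cdot\sum_{i=1}^m\sigma_i(\varphi_s\circ\bh_s)(x_i)\Big],$$
where the supremum is over all admissible $\bu_s$ with $\|\bu_s\|_p\le\Lambda_{k,s}$ and all $h_{s,j}\in\sH_s$. Since the constraints on $(\bu_s,\bh_s)$ for distinct values of $s$ are independent, the supremum of the sum equals the sum of the suprema, so the whole quantity splits as a sum over $s$ of per-layer terms. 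For each fixed $s$ I would apply Hölder's inequality, $\bu_s\cdot\mathbf v\le\|\bu_s\|_p\|\mathbf v\|_q\le\Lambda_{k,s}\|\mathbf v\|_q$ with $\mathbf v=\sum_i\sigma_i(\varphi_s\circ\bh_s)(x_i)$, pulling out the factor $\Lambda_{k,s}$ and leaving an expectation of $\sup_{\bh_s}\|\mathbf v\|_q$.

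Next I would reduce the $\ell_q$-norm over the $n_s$ coordinates to an $\ell_\infty$-norm via $\|\mathbf v\|_q\le n_s^{1/q}\|\mathbf v\|_\infty$, which introduces the factor $n_s^{1/q}$. Since $\|\mathbf v\|_\infty=\max_{j}\big|\sum_i\sigma_i\varphi_s(h_{s,j}(x_i))\big|$ and each coordinate function $h_{s,j}$ ranges freely and independently over $\sH_s$, the supremum over $\bh_s$ of this maximum collapses to a single supremum over one copy of $\sH_s$, namely $\sup_{h\in\sH_s}\big|\sum_i\sigma_i\varphi_s(h(x_i))\big|$. At this point the per-layer term is $\Lambda_{k,s}\,n_s^{1/q}$ times the residual quantity $\frac{1}{m}\E_{\ssigma}\big[\sup_{h\in\sH_s}\big|\sum_i\sigma_i\varphi_s(h(x_i))\big|\big]$.

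The last two steps, and the only real subtlety, concern this residual quantity. Writing $X_h=\sum_i\sigma_i\varphi_s(h(x_i))$, I would first remove the absolute value through $\sup_h|X_h|\le\sup_h X_h+\sup_h(-X_h)$; because the $\sigma_i$ are symmetric, the substitution $\ssigma\mapsto-\ssigma$ shows that $\E_{\ssigma}[\sup_h(-X_h)]=\E_{\ssigma}[\sup_h X_h]$, which accounts for the overall factor of $2$ in the statement. Then, since each $\varphi_s$ is $1$-Lipschitz, I would invoke Talagrand's contraction lemma to strip the activation, obtaining $\E_{\ssigma}[\sup_h\sum_i\sigma_i\varphi_s(h(x_i))]\le\E_{\ssigma}[\sup_h\sum_i\sigma_i h(x_i)]=m\,\h\R_S(\sH_s)$. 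The point to be careful about is precisely the ordering: the contraction lemma applies cleanly to a one-sided supremum without absolute values, so the symmetrization step must come first. Combining the three factors $\Lambda_{k,s}$, $n_s^{1/q}$, and $2\,\h\R_S(\sH_s)$ and summing over $s=1,\dots,k-1$ yields exactly the claimed bound.
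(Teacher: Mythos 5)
Your proposal is correct and follows essentially the same route as the paper's proof: decompose the supremum across layers, pull out $\Lambda_{k,s}$ via duality/H\"older, convert the $\ell_q$-norm to $n_s^{1/q}$ times a single supremum over $\sH_s$, symmetrize to remove the absolute value (yielding the factor $2$), and finish with Talagrand's contraction lemma. The ordering subtlety you flag—symmetrization before contraction—is exactly how the paper's proof proceeds as well.
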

For the family $\sH^*_k$, which is directly relevant to many of our
experiments, the following more explicit upper bound can be derived,
using Lemma~\ref{lemma:Rad_Hk_Hk-1}.

\begin{lemma}
\label{lemma:Rad_Hk}
Let $\Lambda_k \!= \prod_{s = 1}^k 2 \Lambda_{s, s - 1}$ and
$N_k \!= \prod_{s = 1}^k n_{s - 1}$. Then, for any $k \geq 1$, the
empirical Rademacher complexity of $\sH_k^*$ for a sample $S$ of size
$m$ can be upper bounded as follows:
\begin{equation*}
\h \R_S(\sH_k^*) 
\leq r_\infty \Lambda_k N_k^{\frac{1}{q}} 
\sqrt{\frac{\log (2 n_0)}{2 m}}.
\end{equation*}
\end{lemma}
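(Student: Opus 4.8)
The plan is to proceed by induction on $k$, using Lemma~\ref{lemma:Rad_Hk_Hk-1} to propagate the bound upward through the layers. The key simplification is that for the starred family $\sH_k^*$ all hyperparameters $\Lambda_{k,s}$ with $s < k-1$ vanish, so the sum in Lemma~\ref{lemma:Rad_Hk_Hk-1} collapses to a single term and the recursion becomes
\begin{equation*}
\h \R_S(\sH_k^*) \leq 2 \Lambda_{k, k-1}\, n_{k-1}^{1/q}\, \h \R_S(\sH_{k-1}^*).
\end{equation*}
Everything then reduces to (i) establishing the base case $k = 1$ directly, and (ii) checking that the product definitions of $\Lambda_k$ and $N_k$ are exactly what this recursion generates.

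For the base case, note that $\sH_1^* = \sH_1$ consists of the linear functions $x \mapsto \bu \cdot \bPsi(x)$ with $\|\bu\|_p \leq \Lambda_{1,0}$. First I would pull $\bu$ out of the Rademacher sum and apply H\"older's inequality (equivalently, the definition of the dual norm), giving
\begin{equation*}
\h \R_S(\sH_1) = \frac{\Lambda_{1,0}}{m} \E_{\ssigma}\Big[\big\| \textstyle\sum_{i=1}^m \sigma_i \bPsi(x_i) \big\|_q\Big].
\end{equation*}
Next I would bound the $\ell_q$-norm by the $\ell_\infty$-norm via $\|v\|_q \leq n_0^{1/q}\|v\|_\infty$ for $v \in \Rset^{n_0}$, reducing the problem to controlling $\E_{\ssigma}[\max_j |\sum_i \sigma_i \Psi_j(x_i)|]$, the expected maximum of $2n_0$ signed Rademacher averages. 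Applying Massart's finite-class lemma to this set of $2n_0$ vectors, each of Euclidean norm at most $\sqrt{m}\, r_\infty$, yields the bound $r_\infty \sqrt{2m \log(2n_0)}$. Combining these steps produces exactly $\h \R_S(\sH_1) \leq 2\Lambda_{1,0}\, n_0^{1/q}\, r_\infty \sqrt{\log(2n_0)/(2m)}$, which is the claim for $k=1$ since $\Lambda_1 = 2\Lambda_{1,0}$ and $N_1 = n_0$.

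The inductive step is then routine bookkeeping: substituting the inductive hypothesis into the collapsed recursion and using $\Lambda_k = 2\Lambda_{k,k-1}\Lambda_{k-1}$ together with $N_k^{1/q} = n_{k-1}^{1/q} N_{k-1}^{1/q}$ immediately reproduces the desired form with $\Lambda_k$ and $N_k$. The only genuinely nontrivial ingredient is the base case, and within it the maximal inequality; matching both the constant $2$ and the exponent $1/q$ requires care in tracking how the factor $n_0^{1/q}$ from the norm comparison combines with the $\sqrt{2}$ emerging from Massart's bound, but no obstacle beyond careful accounting remains once Lemma~\ref{lemma:Rad_Hk_Hk-1} is in hand.
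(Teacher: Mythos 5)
Your proposal is correct and follows essentially the same route as the paper's proof: the base case for $\sH_1$ is established via the dual-norm identity, the comparison $\|\cdot\|_q \leq n_0^{1/q}\|\cdot\|_\infty$, and Massart's lemma applied to the $2n_0$ signed coordinate vectors, after which the paper simply invokes Lemma~\ref{lemma:Rad_Hk_Hk-1}, whose sum collapses to the single term $2\Lambda_{k,k-1} n_{k-1}^{1/q}\,\h\R_S(\sH_{k-1}^*)$ for the starred families. Your explicit induction with the bookkeeping $\Lambda_k = 2\Lambda_{k,k-1}\Lambda_{k-1}$ and $N_k = n_{k-1}N_{k-1}$ is exactly the step the paper leaves implicit.
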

Note that $N_k$, which is the product of the number of units in layers
below $k$, can be large. This suggests that values of $p$ closer to
one, that is larger values of $q$, could be more helpful to control
complexity in such cases.  More generally, similar explicit upper
bounds can be given for the Rademacher complexities of subfamilies of
$\sH_k$ with units connected only to layers $k, k - 1, \ldots, k - d$,
with $d$ fixed, $d < k$.  Combining Lemma~\ref{lemma:Rad_Hk} with
Theorem~\ref{th:adanet} helps derive the following explicit learning
guarantee for feedforward neural networks with an output unit
connected to all the other units.

\begin{corollary}[Explicit learning bound]
\label{cor:adanet}
Fix $\rho > 0$.  Let
$\Lambda_k = \prod_{s = 1}^k 4 \Lambda_{s, s - 1}$ and
$N_k \!= \prod_{s = 1}^k n_{s - 1}$. Then, for any $\delta > 0$, with
probability at least $1 - \delta$ over the draw of a sample $S$ of
size $m$ from $\sD^m$, the following inequality holds for all
$f = \sum_{k = 1}^l \bw_k \cdot \bh_k \in \sF^*$:
\begin{align*}
R(f) 
& \leq \widehat{R}_{S, \rho}(f) + \frac{2}{\rho} \sum_{k = 1}^{l} \big \| \bw
  _k \big \|_1 \bigg[\ov r_\infty \Lambda_k N_k^{\frac{1}{q}} \sqrt{\frac{2 \log (2 n_0)}{m}}\bigg] \\
& \quad + \frac{2}{\rho} \sqrt{\frac{\log l}{m}} + C(\rho, l, m, \delta),   
\end{align*}
where $C(\rho, l, m, \delta) = \sqrt{\big\lceil \frac{4}{\rho^2}
  \log(\frac{\rho^2 m}{\log l}) \big\rceil \frac{\log l}{m} +
  \frac{\log(\frac{2}{\delta})}{2m}} = \tl O\Big(\frac{1}{\rho}
\sqrt{\frac{\log l}{m}}\Big)$, and where $\ov r_\infty = \E_{S \sim \sD^m}[r_\infty]$.
\end{corollary}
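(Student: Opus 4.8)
The plan is to combine Theorem~\ref{th:adanet} with the explicit Rademacher bound of Lemma~\ref{lemma:Rad_Hk}, taking care of two bookkeeping issues along the way: the passage from $\sH_k$ to its reflection $\tl\sH_k$, and the fact that the Rademacher complexity appearing in the theorem is the expected version $\R_m(\tl\sH_k)$ rather than the empirical $\h\R_S(\sH_k^*)$ that Lemma~\ref{lemma:Rad_Hk} controls. First I would apply Theorem~\ref{th:adanet} to $f\in\sF^*$, which is legitimate since $\sF^*\subseteq\sF$ (the family $\sH^*_k$ is just $\sH_k$ with the extra connection weights $\Lambda_{k,s}$, $s<k-1$, zeroed out, so $\sH^*_k\subseteq\sH_k$ and hence $\conv(\sH^*)\subseteq\conv(\sH)$). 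This gives the bound with the term $\frac{4}{\rho}\sum_k\|\bw_k\|_1\,\R_m(\tl\sH_k)$, except that for functions in $\sF^*$ the relevant complexity is that of $\tl\sH^*_k$ rather than $\tl\sH_k$. The remaining work is to replace $\R_m(\tl\sH^*_k)$ by the explicit quantity in the corollary.

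\textbf{Handling the reflection.} Next I would observe that reflecting a hypothesis set does not change its Rademacher complexity: since the $\sigma_i$ are symmetric, $\h\R_S(-\sH^*_k)=\h\R_S(\sH^*_k)$, and taking a union with the negated class at most doubles the supremum inside, so $\h\R_S(\tl\sH^*_k)\le 2\,\h\R_S(\sH^*_k)$ (in fact one typically gets equality up to this factor, and the cleanest route is to note $\h\R_S(\sH\cup(-\sH))=\h\R_S(\sH)$ when $\sH$ is symmetric under sign or to simply absorb the factor of $2$). This factor of $2$, together with the factor $4$ coming from the $\frac{4}{\rho}$ in Theorem~\ref{th:adanet}, is precisely what converts the constant $\Lambda_k=\prod_s 2\Lambda_{s,s-1}$ of Lemma~\ref{lemma:Rad_Hk} into the constant $\Lambda_k=\prod_s 4\Lambda_{s,s-1}$ stated in the corollary, while leaving an overall prefactor of $\frac{2}{\rho}$ in front of the bracketed term. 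I would track these constants carefully, since this is where an off-by-a-factor error would most easily creep in.

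\textbf{From empirical to expected complexity.} The one genuine subtlety is that Theorem~\ref{th:adanet} is stated with $\R_m(\tl\sH_k)=\E_{S\sim\sD^m}[\h\R_S(\tl\sH_k)]$, whereas Lemma~\ref{lemma:Rad_Hk} bounds the empirical complexity $\h\R_S(\sH^*_k)$ by $r_\infty\Lambda_k N_k^{1/q}\sqrt{\log(2n_0)/(2m)}$, and $r_\infty=\max_i\|\bPsi(x_i)\|_\infty$ is itself sample-dependent. To reconcile these I would take the expectation of the Lemma's bound over $S\sim\sD^m$: since every factor except $r_\infty$ is deterministic, $\R_m(\sH^*_k)\le \ov r_\infty\,\Lambda_k N_k^{1/q}\sqrt{\log(2n_0)/(2m)}$, where $\ov r_\infty=\E_S[r_\infty]$, exactly the quantity defined in the corollary statement. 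This is the step I expect to be the main obstacle conceptually, though it is short: one must be clear that the high-probability statement of Theorem~\ref{th:adanet} already holds for the \emph{expected} Rademacher complexity, so no additional concentration argument for $r_\infty$ is needed—taking expectations of a deterministic-coefficient bound suffices.

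\textbf{Assembling the pieces.} Finally I would substitute the resulting estimate $\R_m(\tl\sH^*_k)\le 2\ov r_\infty\Lambda_k N_k^{1/q}\sqrt{\log(2n_0)/(2m)}$ into Theorem~\ref{th:adanet}, combining the $\frac{4}{\rho}$ prefactor with the factor $2$ and simplifying $4\cdot 2\cdot\tfrac12=\ldots$ the numerical constants so that $\frac{4}{\rho}\cdot 2\sqrt{1/(2m)}$ collapses to the advertised $\frac{2}{\rho}\sqrt{2/m}$ multiplying $\ov r_\infty\Lambda_k N_k^{1/q}\sqrt{\log(2n_0)}$, with $\Lambda_k$ now carrying the extra factor $2^k$ that upgrades $\prod 2\Lambda_{s,s-1}$ to $\prod 4\Lambda_{s,s-1}$. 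The additive terms $\frac{2}{\rho}\sqrt{\log l/m}$ and $C(\rho,l,m,\delta)$ carry over verbatim from Theorem~\ref{th:adanet}, completing the derivation.
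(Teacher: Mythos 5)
Your proposal is correct and follows essentially the same route as the paper's proof: apply Theorem~\ref{th:adanet} to $\sF^* = \conv(\sH^*)$ so that the complexities $\R_m(\tl \sH^*_k)$ appear, account for the reflection $\tl \sH^*_k = \sH^*_k \cup (-\sH^*_k)$, and substitute the expectation over $S$ of the bound of Lemma~\ref{lemma:Rad_Hk}, which is exactly where $\ov r_\infty$ enters. The only divergence is in the bookkeeping of the reflection: the paper absorbs it by doubling each $\Lambda_{s, s-1}$ inside Lemma~\ref{lemma:Rad_Hk}, giving exactly $\prod_{s=1}^k 4 \Lambda_{s, s-1}$, whereas your single global factor of $2$ yields $2 \prod_{s=1}^k 2\Lambda_{s,s-1}$, which is not ``precisely'' the stated constant as you claim but is smaller for $k \geq 2$, so the stated bound still follows.
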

The learning bound of Corollary~\ref{cor:adanet} is 
a finer guarantee than previous ones by \citep{Bartlett1998},
\cite{NeyshaburTomiokaSrebro2015}, or \cite{SunChenWangLiu2016}. 
This is because it explicitly 
differentiates between the weights of different
layers while previous bounds treat all weights indiscriminately.
This is crucial to the design of algorithmic design since the network
complexity no longer needs to grow exponentially as a function of
depth.  Our bounds are also more general and apply to more
other network architectures, such as those introduced in
\citep{HeZhangRenSun2015,HuangLiuWeinberger2016}.

\ignore{The generalization bound above informs us that the complexity of the
neural network returned is a weighted combination of the complexities
of each unit in the neural network, where the weights 
are precisely the ones that define our network. Specifically, this again agrees
with our intuition that deeper networks are more complex and suggests
that if we can find a model that has both small empirical error and
most of its weight on the shallower units, then such a model will
generalize well.

Toward this goal, we will design an algorithm that directly seeks to
minimize upper bounds of this generalization bound. In the process,
our algorithm will train neural networks that discriminate deeper
networks from shallower ones. This is a novel property that existing
regularization techniques in the deep learning toolbox do not enforce.
Techniques such as $l_2$ and $l_1$ regularization and
dropout (see e.g. \cite{GoodfellowBengioCourville2016}) are generally
applied uniformly across all units in the network.}
 
\section{Algorithm}
\label{sec:algorithm}

This section describes our algorithm, \textsc{AdaNet}, for
\emph{adaptive} learning of neural networks. \textsc{AdaNet}
adaptively grows the structure of a neural network, balancing model
complexity with empirical risk minimization.  We also describe in
detail in Appendix~\ref{sec:alternatives} another variant of
\textsc{AdaNet} which admits some favorable properties.

Let $x \mapsto \Phi(-x)$ be a non-increasing convex function
upper-bounding the zero-one loss, $x \mapsto 1_{x \leq 0}$, such that
$\Phi$ is differentiable over $\Rset$ and $\Phi'(x) \neq 0$ for all
$x$. This surrogate loss $\Phi$ may be, for instance, the exponential
function $\Phi(x) = e^x$ as in AdaBoost 
\citet{FreundSchapire97}, or the logistic function,
$\Phi(x) = \log(1 + e^x)$ as in logistic regression.

\subsection{Objective function}

Let $\set{h_1, \ldots, h_N}$ be a subset of $\sH^*$. In the most
general case, $N$ is infinite. However, as discussed later, in
practice, the search is limited to a finite set.  For any $j \in [N]$, we will denote by $r_j$ the Rademacher complexity of the family
$\sH_{k_j}$ that contains $h_j$: $r_j = \R_m(\sH_{k_j})$.

\textsc{AdaNet} seeks to find a function 
$f = \sum_{j = 1}^N w_j h_j \in \sF^*$ (or neural network) that directly minimizes the
data-dependent generalization bound of Corollary~\ref{cor:adanet}.
This leads to the following objective function:
\begin{align}
\label{eq:objective}
F(\bw) = \frac{1}{m} \sum_{i = 1}^m \Phi\Big(1
    - y_i \sum_{j = 1}^N w_j h_j \Big) + \sum_{j = 1}^N \Gamma_j |w_j|, 
\end{align}
where $\bw \in \Rset^{N}$ and $\Gamma_j = \lambda r_j + \beta$, with
$\lambda \geq 0$ and $\beta \geq 0$ hyperparameters. 
The objective function \eqref{eq:objective} is a convex function of
$\bw$. It is the sum of a convex surrogate of the empirical error and
a regularization term, which is a weighted-$l_1$ penalty containing
two sub-terms: a standard norm-1 regularization which admits $\beta$
as a hyperparameter, and a term that discriminates the functions $h_j$
based on their complexity.

The optimization problem consisting of minimizing the objective
function $F$ in \eqref{eq:objective} is defined over a very large space
of base functions $h_j$.  \textsc{AdaNet} consists of applying
coordinate descent to \eqref{eq:objective}. In that sense, our
algorithm is similar to the DeepBoost algorithm of
\citet{CortesMohriSyed2014}. However, unlike DeepBoost, which combines
decision trees, \textsc{AdaNet} learns a deep neural network, which
requires new methods for constructing and searching the space of
functions $h_j$. Both of these aspects differ significantly from the
decision tree framework. In particular, the search is particularly
challenging.  In fact, the main difference between the algorithm
presented in this section and the variant described in
Appendix~\ref{sec:alternatives} is the way new candidates $h_j$ are
examined at each iteration.

\begin{figure}[t] 
\centering
\begin{tabular}{c@{\hspace{1cm}}c}
  (a) & \includegraphics[scale=.2]{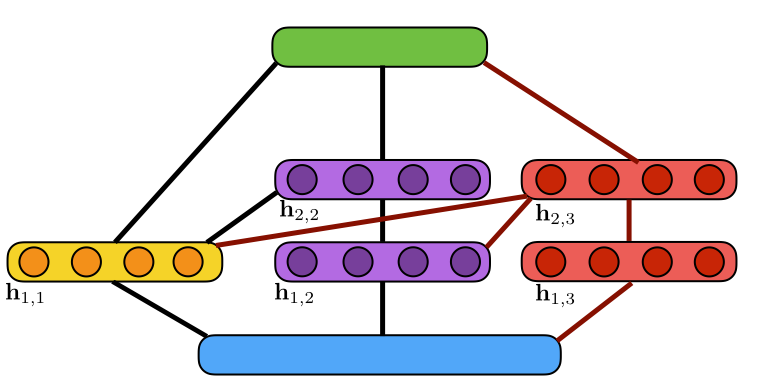} \\
  (b) & \includegraphics[scale=.2]{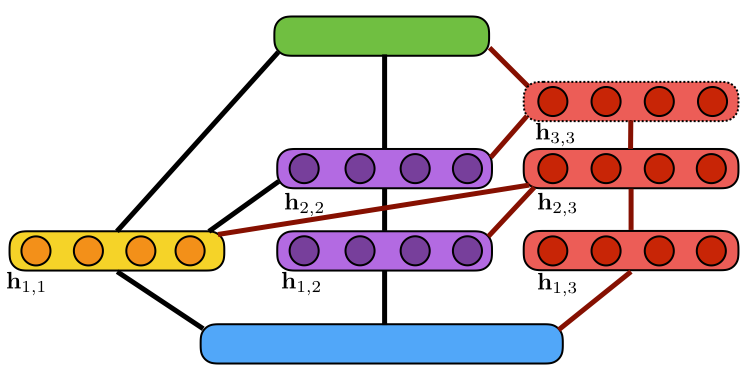} \\
\end{tabular}
\caption{Illustration of the algorithm's incremental construction of a
  neural network.  The input layer is indicated in blue, the output
  layer in green. Units in the yellow block are added at the first
  iteration while units in purple are added at the second iteration.
  Two candidate extensions of the architecture are considered at the
  the third iteration (shown in red): (a) a two-layer extension; (b) a
  three-layer extension. Here, a line between two blocks of units
  indicates that these blocks are fully-connected.}
\vskip -.25in
\label{fig:adanet}
\end{figure}

\subsection{Description}

We start with an informal description of \textsc{AdaNet}.  Let
$B \geq 1$ be a fixed parameter determining the number of units per
layer of a candidate subnetwork.  The algorithm proceeds in $T$
iterations.  Let $l_{t - 1}$ denote the depth of the neural network
constructed before the start of the $t$-th iteration.  At iteration
$t$, the algorithm selects one of the following two options:

1. augmenting the current neural network with a subnetwork with the
same depth as that of the current network
$\bh \in \sH^{* B}_{l_{t - 1}}$, with $B$ units per layer.  Each unit
in layer $k$ of this subnetwork may have connections to existing units
in layer $k-1$ of \textsc{AdaNet} in addition to connections to units
in layer $k-1$ of the subnetwork.

2.  augmenting the current neural network with a deeper subnetwork
(depth $l_{t - 1} + 1$) $\bh' \in \sH^{* B}_{l_{t - 1}}$. The set of
allowed connections is defined the same way as for $\bh$.

The option selected is the one leading to the best reduction of the
current value of the objective function, which depends both on the
empirical error and the complexity of the subnetwork added, which is
penalized differently in these two options.

Figure~\ref{fig:adanet} illustrates this construction and the two
options just described.  An important aspect of our algorithm is that
the units of a subnetwork learned at a previous iteration (say
$\bh_{1,1}$ in Figure~\ref{fig:adanet}) can serve as input to deeper
subnetwork added later (for example $\bh_{2, 2}$ or $\bh_{2, 3}$ in
the Figure).  Thus, the deeper subnetworks added later can take
advantage of the embeddings that were learned at the previous iterations.
The algorithm terminates after $T$ rounds or if the \textsc{AdaNet}
architecture can no longer be extended to improve the objective
\eqref{eq:objective}.

\ignore{
rounds $t=1, \ldots, T$.  At the first round
$t=1$, we consider two models: a linear model and a neural network
with one hidden layer. This neural network can be viewed as a vector
of elements of $\sH_2^{(p)}$:
$\bh_{1,1} = (h_{1,1,1}, \ldots, h_{1,B,1})$ where $B$ is a size of
the hidden layer and $h_{k, j, t}$ denotes $j$-th unit in the
$k$-layer introduced to the model on $t$-th iteration.  Similarly, the
linear model is an element of $\sH_1^{(p)}$. The model that provides
the best improvement in \eqref{eq:objective}, that is the model with
the best tradeoff between empirical error and complexity, becomes a
part of the \textsc{AdaNet} architecture.  Now suppose that at round
$t=1$, we have selected a model with one hidden layer.  Then at the
next round we consider two subnetworks, with one and two hidden layers
respectively.  An important aspect of our algorithm is that the
$\bh_{1,1} = (h_{1,1,1}, \ldots, h_{1,B,1})$ that was learned on the
first iteration serves as an input to the last hidden layer of the
subnetwork with two hidden layers. In this way, the new units that we
introduce to the model can take advantage of the embeddings that we
have learned at the previous iterations. Similarly, if, for $t=2$, the
subnetwork of depth two provides a greater improvement in
\eqref{eq:objective}, then, at the next iteration, we choose between
subnetworks of depth two and three. In this case,
$(\bh_{1,1}, \bh_{1,2}) = (h_{1,1,1}, \ldots, h_{1,B,1}, h_{1, 1, 2},
\ldots, h_{1, B, 2})$ serves as an input to the second layer of each
of these subnetworks.  The second layer
$\bh_{2, 2} = (h_{2,1,2}, \ldots, h_{2,B,2})$ of the subnetwork added
at $t=2$ serves as an input to the third layer of subnetwork of depth
three.  The algorithm terminates after $T$ rounds or if we can no
longer extend the \textsc{AdaNet} architecture to improve
\eqref{eq:objective}.  See Figure~\ref{fig:adanet} for an example.
}

More formally, \textsc{AdaNet} is a boosting-style algorithm that
applies (block) coordinate descent to \eqref{eq:objective}.  At each
iteration of block coordinate descent, descent coordinates $\bh$ (base
learners in the boosting literature) are selected from the space of
functions $\sH^*$.  These coordinates correspond to the direction of
the largest decrease in \eqref{eq:objective}.  Once these coordinates
are determined, an optimal step size in each of these directions is
chosen, which is accomplished by solving an appropriate convex
optimization problem.

Note that, in general, the search for the optimal descent coordinate
in an infinite-dimensional space or even in finite but large sets such
as that of all decision trees of some large depth may be intractable,
and it is common to resort to a heuristic search (weak learning
algorithm) that returns $\delta$-optimal coordinates. For instance, in
the case of boosting with trees one often grows trees according to
some particular heuristic \cite{FreundSchapire97}.

We denote the \textsc{AdaNet} model after $t - 1$
rounds by $f_{t - 1}$, which is parameterized by $\bw_{t - 1}$.
Let $\bh_{k, t-1}$ denote the vector of outputs of units in the $k$-th
layer of the \textsc{AdaNet} model, $l_{t-1}$ be the depth of the 
\textsc{AdaNet} architecture, $n_{k, t-1}$ 
be the number of units in $k$-th layer after $t-1$ rounds.
At round $t$, we select descent coordinates
by considering two candidate subnetworks
$\bh \in \widetilde{\sH}_{l_{t-1}}^*$ and
$\bh' \in  \widetilde{\sH}_{l_{t-1} + 1}^*$
that are generated by a weak learning algorithm
\textsc{WeakLearner}. Some choices for this algorithm
in our setting are described below.\ignore{,
where $\widetilde{\sH}^{(p)}_{k}$ is defined to be:
\begin{align}
\label{eq:Hk_tilde}
& \set[\bigg]{x \mapsto  \bu' \cdot (\varphi_{k-1}\circ \bh')(x) + 
 \bu \cdot (\varphi_{k-1}\circ \bh_{k-1, t-1})(x) \colon \nonumber \\
&\quad\quad\quad   (\bu', \bu) \in \Rset^{B + n_{k-1, t-1}}, h'_j \in \widetilde{\sH}^{(p)}_{k-1}},
\end{align}
In other words, $\widetilde{\sH}^{(p)}_{k}$ can viewed as a
restriction of $\sH_{k}$ to functions that only use inputs from the
layer directly below, and this layer, itself, consists of the units
that have been previously added to \textsc{AdaNet} architecture as
well as some newly introduced ones. There are multiple ways of
generating candidate subnetworks, and we discuss some of them below.}
Once we obtain $\bh$ and $\bh'$, we select one of these vectors of
units, as well as a vector of weights $\bw \in \Rset^B$, so that the
result yields the best improvement in \eqref{eq:objective}. This is
equivalent to minimizing the following objective function over
$\bw \in \Rset^B$ and $\bu \in \{\bh, \bh'\}$:
\begin{align}
\label{eq:objective_t}
F_t(\bw, \bu) 
 \!=\! \frac{1}{m} \sum_{i = 1}^m \Phi\Big( 1 - y_i f_{t - 1}(x_i)
&- y_i \bw \cdot \bu(x_i) \Big) 
\nonumber  \\ & \quad+ \Gamma_{\bu} \| \bw \|_1,
\end{align}
where $\Gamma_\bu  = \lambda r_\bu  + \beta$ and $r_\bu$
is $\R_m\big(\sH_{l_{t-1}}\big)$ if $\bu = \bh$ and
$\R_m\big(\sH_{l_{t-1} + 1}\big)$ otherwise.
In other words, if $\min_{\bw} F_t(\bw, \bh) \leq \min_{\bw} F_t(\bw, \bh')$,
then
\begin{align*}
  \bw^* = \argmin_{\bw \in \Rset^B} {F_t(\bw, \bh)}, \quad   \bh_t = \bh
\end{align*}
and otherwise
\begin{align*}
  &\bw^* = \argmin_{\bw \in \Rset^B} {F_t(\bw, \bh')}, \quad \bh_t = \bh'
\end{align*}
If $F(\bw_{t-1} + \bw^*) < F(\bw_{t-1})$ then we set
$f_{t-1} = f_t + \bw^* \cdot \bh_t$ and otherwise we terminate
the algorithm.

\begin{figure}[t]
  \begin{ALGO}{AdaNet}{S = ((x_i, y_i)_{i = 1}^m}
  \SET{f_0}{0}

\DOFOR{t \EQ 1 \TO T}
  \SET{\bh, \bh'}{\textsc{WeakLearner}\big(S, f_{t-1}\big)}
  \SET{\bw}{\textsc{Minimize}\big(F_t(\bw, \bh)\big)}
  \SET{\bw'}{\textsc{Minimize}\big(F_t(\bw, \bh')\big)}
  \IF{F_t(\bw, \bh') \leq F_t(\bw', \bh')}
  \SET{\bh_t}{\bh}
  \ELSE
  \SET{\bh_t}{\bh'}
  \FI
  \IF{F(\bw_{t-1} + \bw^*) < F(\bw_{t-1})}
  \SET{f_{t-1}}{f_t + \bw^* \cdot \bh_t}
  \ELSE
  \RETURN{f_{t-1}}
  \FI   
\OD
\RETURN{f_T}
\end{ALGO}
\vskip -.5cm
\caption{Pseudocode of the AdaNet algorithm. On line 3 two candidate
 subnetworks are generated (e.g. randomly or by solving
 \eqref{eq:subnetwork_objective}). On lines 3 and 4, \eqref{eq:objective_t}
 is solved for each of these candidates. On lines 5-7 the best
 subnetwork is selected and on lines 9-11 termination condition
 is checked.}
\label{algo:adanet}
\end{figure}

\begin{table*}[t]
  \caption{Experimental results for \textsc{AdaNet}, \textsc{NN},
    \textsc{LR} and \textsc{NN-GP} for different
    pairs of labels in {\tt CIFAR-10}. Boldfaced results are
    statistically significant at a 5\% confidence level.}
  \label{tbl:adanet_vs_dnn}
  \vskip 0.15in
  \begin{center}
    \begin{small}
      \begin{tabular}{lcccc}
        \hline
        \abovespace\belowspace
        Label pair & \textsc{AdaNet} & \textsc{LR} & \textsc{NN} & \textsc{NN-GP} \\
        \hline
        \abovespace
        {\tt deer-truck}        & {\bf 0.9372 $\pm$ 0.0082} & 0.8997 $\pm$ 0.0066 & 0.9213 $\pm$ 0.0065 & 0.9220 $\pm$ 0.0069 \\
        {\tt deer-horse}        & {\bf 0.8430 $\pm$ 0.0076} & 0.7685 $\pm$ 0.0119 & 0.8055 $\pm$ 0.0178 & 0.8060 $\pm$ 0.0181 \\
        {\tt automobile-truck}  & {\bf 0.8461 $\pm$ 0.0069} & 0.7976 $\pm$ 0.0076 & 0.8063 $\pm$ 0.0064 & 0.8056 $\pm$ 0.0138 \\
        {\tt cat-dog}           & {\bf 0.6924 $\pm$ 0.0129} & 0.6664 $\pm$ 0.0099 & 0.6595 $\pm$ 0.0141 & 0.6607 $\pm$ 0.0097 \\
        {\tt dog-horse}         & {\bf 0.8350 $\pm$ 0.0089} & 0.7968 $\pm$ 0.0128 & 0.8066 $\pm$ 0.0087 & 0.8087 $\pm$ 0.0109 \\
        \hline
      \end{tabular}
    \end{small}
  \end{center}
  \vskip -0.2in
\end{table*}

There are many different choices for the \textsc{WeakLearner}
algorithm.  For instance, one may generate a large number of random
networks and select the one that optimizes
\eqref{eq:objective_t}. Another option is to directly minimize
\eqref{eq:objective_t} or its regularized version:
\begin{align}
\label{eq:subnetwork_objective}
\widetilde{F}_t(\bw, \bh) 
 \!=\! \frac{1}{m} \sum_{i = 1}^m \Phi\Big( 1 - y_i f_{t - 1}(x_i)
&- y_i \bw \cdot \bh(x_i) \Big) 
  \nonumber\\ &\quad+ \mathcal{R}(\bw, \bh),
\end{align}
over both $\bw$ and $\bh$.
Here $\mathcal{R}(\bw ,\bh)$ is a regularization term that,
for instance, can be used to enforce
that $\|\bu_s\|_p \leq \Lambda_{k,s}$ in \eqref{eq:Hk}.
Note that, in general, \eqref{eq:subnetwork_objective} is
a non-convex objective. However, we do not rely
on finding a global solution to the corresponding optimization
problem. In fact, standard guarantees for regularized
boosting only require that each $\bh$ that is added to the model
decreases the objective by a constant amount (i.e. it satisfies
$\delta$-optimality condition) for a boosting algorithm
to converge \citep{RatschMikaWarmuth2001,LuoTseng1992}.

Furthermore, the algorithm that we present in
Appendix~\ref{sec:alternatives} uses a weak-learning algorithm that
solves a convex sub-problem at each step and that additionally has a
closed-form solution. This comes at the cost of a more restricted
search space for finding a descent coordinate at each step of the
algorithm.

We conclude this section by observing that in our description of
\textsc{AdaNet} we have fixed $B$ for all iterations
and only two candidate subnetworks are considered at each step.
Our approach easily extends to an arbitrary number of candidate
subnetworks (for instance of different depth $l$) as well
as varying number of units per layer $B$. Furthermore,
selecting an optimal subnetwork among the candidates is
easily parallelizable allowing for efficient and effective search
for optimal descent directions.

\section{Experiments}
\label{sec:experiments}

In this section we present the results of our experiments with
\textsc{AdaNet} algorithm.

\ignore{
To show the performance of our
proposed system on CIFAR-10~\cite{Krizhevsky09learningmultiple} on two
set of experiments. Firstly, we present the benchmark of multiple
binary classification tasks and secondly, we present an exhaustive
system exploration where different configurations have competed with
each other.}

\subsection{CIFAR-10}

In our first set of experiments, we used the {\tt CIFAR-10} dataset
\cite{Krizhevsky09learningmultiple}.  This dataset consists of
$60\mathord,000$ images evenly categorized in $10$ different classes.
To reduce the problem to binary classification we considered five
pairs of classes: {\tt deer-truck}, {\tt deer-horse}, {\tt
  automobile-truck}, {\tt cat-dog}, {\tt dog-horse}.  Raw images have
been preprocessed to obtain color histograms and histogram of gradient
features.  The result is $154$ real valued features with ranges
$[0,1]$.

We compared \textsc{AdaNet} to standard
feedforward neural networks (\textsc{NN}) and logistic regression
(\textsc{LR}) models.
Note that convolutional neural networks are often
a more natural choice for image classification problems
such as {\tt CIFAR-10}. However, the goal of these
experiments is not to obtain state-of-the-art results
for this particular task, but to provide a proof-of-concept
illustrating that our structural learning approach can be
competitive with traditional approaches
for finding efficient architectures and training corresponding
networks.

Note that \textsc{AdaNet} algorithm requires the knowledge
of complexities $r_j$, which in certain cases can be estimated from data.
In our experiments, we have used the upper bound in Lemma~\ref{lemma:Rad_Hk}.
Our algorithm admits a number of hyperparameters:
regularization hyperparameters $\lambda$, $\beta$,
number of units $B$ in each layer of new subnetworks that are used
to extend the model at each iteration and
a bound $\Lambda_{k}$ on weights $(\bu',\bu)$  in each unit.
As discussed in Section~\ref{sec:algorithm}, there are different approaches
to finding candidate subnetworks in each iteration. In our experiments,
we searched for candidate subnetworks by minimizing
\eqref{eq:subnetwork_objective} with $\mathcal{R} = 0$.
This also requires a learning rate hyperparameter $\eta$.
These hyperparamers have been optimized over the following ranges:
$\lambda \in \{0, 10^{-8}, 10^{-7}, 10^{-6}, 10^{-5}, 10^{-4}\}$,
$B \in \{100, 150, 250\}$, $\eta \in \{10^{-4}, 10^{-3}, 10^{-2}, 10^{-1}\}$.
We have used a single $\Lambda_k$ for all $k > 1$
optimized over $\{1.0, 1.005, 1.01, 1.1, 1.2\}$. For simplicity,
$\beta = 0$.

Neural network models also admit learning rate $\eta$ and regularization
coefficient $\lambda$ as hyperparameters, as well as the number of
hidden layers $l$ and number of units $n$ in each hidden layer.
The range of $\eta$ was the same as for \textsc{AdaNet}
and we varied $l$ in $\{1,2,3\}$, $n$ in $\{100, 150, 512, 1024, 2048\}$
and $\lambda \in \{0, 10^{-5}, 10^{-4}, 10^{-3}, 10^{-2}, 10^{-1}\}$.
Logistic regression only admits $\eta$ and $\lambda$ as its
hyperparameters that were optimized over the same ranges.
Note that the total number of hyperparameter settings
for \textsc{AdaNet} and standard neural networks is exactly
the same. Furthermore, the same holds for the number of hyperparameters
that determine resulting architecture of the model:
$\Lambda$ and $B$ for \textsc{AdaNet} and $l$ and $n$ for neural
network models. Observe that while a particular setting of $l$ and $n$
determines a fixed architecture $\Lambda$ and $B$ parameterize
a structural learning procedure that may result in a different
architecture depending on the data.

In addition to the grid search procedure,
we have conducted a hyperparameter optimization
for neural networks using Gaussian process
bandits (\textsc{NN-GP}),
which is a sophisticated Bayesian non-parametric method for
response-surface modeling in conjunction with a bandit
algorithm~\cite{Snoek12bandit}. Instead of operating
on a pre-specified grid, this allows one to search for hyperparameters
in a given range. We used the following ranges:
$\lambda \in [10^{-5}, 1]$, $\eta \in [10^{-5}, 1]$,
$l \in [1,3]$ and $n \in [100,2048]$. This algorithm was run
for 500 trials which is more than the number of hyperparameter settings
considered by \textsc{AdaNet} and \textsc{NN}. 
Observe that this search procedure can also
be applied to our algorithm but we choose not to do it
in this set of experiments to further demonstrate competitiveness
of the structural learning approach.

In all experiments we use ReLu activations. \textsc{NN},
\textsc{NN-GP} and \textsc{LR} are trained using
stochastic gradient method with batch size of 100 and maximum of
10,000 iterations. The same configuration is used for solving
\eqref{eq:subnetwork_objective}. We use $T=30$ for \textsc{AdaNet}
in all our experiments although in most cases algorithm
terminates after 10 rounds.

In each of the experiments, we used standard 10-fold cross-validation
for performance evaluation and model selection. In particular, the
dataset was randomly partitioned into 10 folds, and each algorithm was
run 10 times, with a different assignment of folds to the training
set, validation set and test set for each run. Specifically, for each
$i \in \{0, \ldots, 9\}$, fold $i$ was used for testing, fold $i +
1~(\text{mod}~10)$ was used for validation, and the remaining folds
were used for training. For each setting of the parameters, we
computed the average validation error across the 10 folds, and
selected the parameter setting with maximum average accuracy across
validation folds. We report average accuracy (and standard deviations)
of the selected hyperparameter setting across test folds
in Table~\ref{tbl:adanet_vs_dnn}.

\ignore{
The set of
hyperparameters studied for \textsc{AdaNet} are learning rate between
$[10^{-5}, 0.1]$, 10,000 iterations per subnetwork, 100 as batch size,
nodes per layer between [100, 250], cost per layer between [1.0, 1.2]
with 0.05 increments, complexity regularization in $[0, 10^{-8},
  10^{-7}, 10^{-6}, 10^{-5}, 10^{-4}]$ and 3,000 iterations for the
output layer. Neural networks and logistic regression have been
optimized with the following hyperparameters range: batch size in [5,
  200], learning rate between $[10^{-5}, 2]$, L1 and L2
regularizations between $[10^{-4}, 10]$. In addition, neural networks
are also optimized using between 1 to 3 hidden layers, number of nodes
in each hidden layer ranged from 20 to 2048.} 

Our results show that \textsc{AdaNet} outperforms other methods
on each of the datasets.
The average architectures for all label
pairs are provided in Table~\ref{tbl:adanet_archs}.
Note that \textsc{NN} and \textsc{NN-GP} always selects
one layer architecture. The architectures selected by \textsc{AdaNet}
typically also have just one layer and fewer nodes than those
selected by \textsc{NN} and \textsc{NN-GP}. However,
on a more challenging problem {\tt cat-dog} \textsc{AdaNet}
opts for a more complex model with two layers which results in
a better performance. This further illustrates that
our approach allows to learn network architectures in
adaptive fashion depending on the complexity of the given problem.

\begin{table}[t]
  \caption{Average number of units in each layer.}
  \label{tbl:adanet_archs}
  \vskip 0.05in
  \begin{center}
    \begin{small}
      \begin{tabular}{l@{\hspace{.075cm}}c@{\hspace{.075cm}}c@{\hspace{.075cm}}c@{\hspace{.075cm}}c@{\hspace{.075cm}}}
        \hline
        \abovespace\belowspace
        Label pair & \multicolumn{2}{c}{\textsc{AdaNet}}  & \textsc{NN} & \textsc{NN-GP} \\
        \hline
          & 1st layer & 2nd layer & & \\
        \hline
        \abovespace
        {\tt deer-truck}        & 990 & 0 & 2048 & 1050  \\
        {\tt deer-horse}        & 1475 & 0 & 2048 & 488 \\
        {\tt automobile-truck}  & 2000 & 0 & 2048 & 1595 \\
        {\tt cat-dog}           & 1800 & 25 & 512 & 155 \\
        {\tt dog-horse}         & 1600 & 0 & 2048 & 1273 \\
        \hline
      \end{tabular}
    \end{small}
  \end{center}
  \vskip -0.15in
\end{table}

\begin{table}[t]
  \caption{Experimental results for different variants of\textsc{AdaNet}.}
  \label{tbl:adanet_systems}
  \vskip 0.15in
  \begin{center}
    \begin{small}
      \begin{tabular}{cc}
        \hline
        \abovespace\belowspace
        Algorithm & Accuracy ($\pm$ std. dev.) \\
        \hline
        \abovespace
        \ignore{Baseline & 0.9354 $\pm$ 0.0082 \\}
        \ignore{R & 0.9370 $\pm$ 0.0082 \\}
        \textsc{AdaNet.SD} & 0.9309 $\pm$ 0.0069 \\
        \textsc{AdaNet.R} & 0.9336 $\pm$ 0.0075 \\
        \textsc{AdaNet.P} & 0.9321 $\pm$ 0.0065 \\
        \textsc{AdaNet.D} & 0.9376 $\pm$ 0.0080 \\
        \hline
      \end{tabular}
    \end{small}
  \end{center}
  \vskip -0.2in
\end{table}

As discussed in
Section~\ref{sec:algorithm}, various different heuristics
can be used to generate candidate subnetworks on each
iteration of \textsc{AdaNet}.
In the second set of experiments we have varied objective
function \eqref{eq:subnetwork_objective}, as well as
the domain over which it is optimized. This allows
us to study sensitivity of \textsc{AdaNet}
to the choice of heuristic that is used to generate
candidate subnetworks.
In particular, we have considered the following variants
of \textsc{AdaNet}.
\textsc{AdaNet.R} uses $\mathcal{R}(\bw, \bh) = \Gamma_\bh \|w\|_1$
as a regularization term in \eqref{eq:subnetwork_objective}.
As \textsc{AdaNet} architecture grows, each new subnetwork
is connected to all the previous subnetworks which
significantly increases the number of connections
in the network and overall complexity of the model.
\textsc{AdaNet.P} and \textsc{AdaNet.D} are
restricting connections to existing subnetworks
in different ways.
\textsc{AdaNet.P}
connects each new subnetwork only to subnetwork that was
added on the previous iteration.
\textsc{AdaNet.D} uses dropout on the connections
to previously added subnetworks.

Finally, \textsc{AdaNet}
uses an upper bound on Rademacher complexity
from Lemma~\ref{lemma:Rad_Hk}.
\textsc{AdaNet.SD} uses standard deviations of the outputs
of the last hidden layer on the training data as surrogate
for Rademacher complexities. The advantage of using
this data-dependent measure of complexity is that it
eliminates hyperparameter $\Lambda$ reducing the hyperparameter
search space.
We report average accuracies across test folds
for {\tt deer-truck} pair in Table~\ref{tbl:adanet_systems}.

\ignore{
In this case the hyperparameter boundaries used for the study involve
using a learning rate in between [0.0001, 0.1], subnetworks training
iterations between 100 and 10,000, a batch size of 100, number of
nodes per layer from 5 to 150, a cost per layer from 1.01 to 1.4 in
0.05 increments, complexity regularization goes from 0 to $10^{-4}$,
dropout probability in the range [0.9, 0] and 10,000 iteration to
train the output layer.

As it can be seen in Table~\ref{tbl:adanet_systems} performance is
similar among all systems. In terms of performance on the
cross-validation study systems R and D are preferred whereas system S
and P are compelling since they tend to produce smaller networks with
fewer number of subnetworks (e.g. in this dataset a minimum of 60\%
fewer subnetworks compared to the rest of the systems).}

\subsection{Criteo Click Rate Prediction}

We also compared \textsc{AdaNet} to \textsc{NN}
on Criteo Click Rate Prediction
dataset~{\small \url{https://www.kaggle.com/c/criteo-display-ad-challenge}}.
This dataset consists of 7 days of data where each instance
is an impression and a binary label (clicked or not clicked).
Each impression has 13 count features and 26 categorical features.
Count features have been transformed by taking the natural logarithm.
The values of categorical features appearing less than 100 times are replaced
by zeros.
The rest of the values are then converted to integers
which are then used as keys to look up embeddings
(that are trained together with each model).
If the number of possible values for a feature $x$ is $d(x)$,
then embedding dimension is set to $6d(f)^{1/4}$ for $d(f) > 25$.
Otherwise, embedding dimension is $d(f)$. Missing feature values
are set to zero.

We have split training set provided in the link above into
training, validation and test set.\footnote{Note that test
set provided in this link does not have ground truth labels
and can not be used in our experiments.} Our training
set received the first 5 days of data
(32,743,299 instances) and validation and test
sets consist of 1 day of data (6,548,659 instances).

Gaussian processes bandits were used to find the best hyperparameter
settings on validation set both for \textsc{AdaNet} and \textsc{NN}.
For \textsc{AdaNet} we have optimized over the following
hyperparameter ranges: $B \in {125, 256, 512}$,
$\Lambda \in [1, 1.5]$, $\eta \in [10^{-4}, 10^{-1}]$,
$\lambda \in [10^{-12},10^{-4}]$.  For \textsc{NN} the ranges were as
follows: $l \in [1,6]$, $n \in [250,512,1024,2048]$,
$\eta \in [10^{-5}, 10^{-1}]$, $\lambda \in [10^{-6},10^{-1}]$.  We
train \textsc{NN}s for 100,000 iterations using mini-batch stochastic
gradient method with batch size of 512. Same configuration is used at
each iteration of \textsc{AdaNet} to solve
\eqref{eq:subnetwork_objective}. The maximum number of hyperparameter
trials is 2,000 for both methods.  Results are presented in
Table~\ref{tbl:criteo}.  In this experiment, \textsc{NN} chooses
architecture with four hidden layer and 512 units in each hidden
layer. Remarkbly, \textsc{AdaNet} achieves a better
accuracy with an architecture consisting of single
layer with just 512 nodes. While the difference
in performance appears small it is statistically significant
on this challenging task.

\begin{table}[t]
  \caption{Experimental results for Criteo dataset.}
  \label{tbl:criteo}
  \vskip 0.15in
  \begin{center}
    \begin{small}
      \begin{tabular}{cc}
        \hline
        \abovespace\belowspace
        Algorithm & Accuracy \\
        \hline
        \abovespace
        \textsc{AdaNet} & 0.7846  \\
        \textsc{NN} & 0.7811\ignore{08617782593} \\
        \hline
      \end{tabular}
    \end{small}
  \end{center}
  \vskip -0.25in
\end{table}

\section{Conclusion}

We presented a new framework and algorithms for adaptively learning
artificial neural networks.  Our algorithm, \textsc{AdaNet}, benefits
from strong theoretical guarantees. It simultaneously learns a neural
network architecture and its parameters by balancing a trade-off
between model complexity and empirical risk minimization.  The
data-dependent generalization bounds that we presented can help guide
the design of alternative algorithms for this problem.  We reported
favorable experimental results demonstrating that our algorithm is
able to learn network architectures that perform better than the ones
found via grid search.  Our techniques are general and can be applied
to other neural network architectures such as CNNs and RNNs.

\newpage

\ignore{
\section*{Acknowledgments} 

This work was partly funded by NSF IIS-1117591 and CCF-1535987.

}

\bibliography{adanet}
{\small\bibliographystyle{icml2017}}

\newpage
\appendix

\section{Related work}
\label{sec:related_work}

There have been several major lines of research on the theoretical
understanding of neural networks. The first one deals with understanding
the properties of the objective function used when training neural
networks
\citep{ChoromanskaHenaffMathieuArousLeCun2014,SagunGuneyArousLeCun2014,ZhangLeeJordan2015,LivniShalevShwartzShamir2014,Kawaguchi2016}.
The second involves studying the black-box optimization algorithms
that are often used for training these networks
\citep{HardtRechtSinger2015,LianHuangLiLiu2015}.  The third analyzes
the statistical and generalization properties of the neural networks
\citep{Bartlett1998,ZhangeEtAl2016,NeyshaburTomiokaSrebro2015,SunChenWangLiu2016}.
The fourth takes the generative point of view
\citep{AroraBhaskaraGeMa2014, AroraLiangMa2015}, assuming that the
data actually comes from a particular network and then show how to
recover it. The fifth investigates the expressive ability of neural
networks and analyzing what types of mappings they can learn
\citep{CohenSharirShashua2015,EldanShamir2015,Telgarsky2016,DanielyFrostigSinger2016}.
This paper is most closely related to the work on statistical and
generalization properties of neural networks. However, instead of
analyzing the problem of learning with a fixed architecture we study a
more general task of learning both architecture and model parameters
simultaneously. On the other hand, the insights that we gain by
studying this more general setting can also be directly applied to the
setting with a fixed architecture.

There has also been extensive work involving structure learning for
neural networks
\citep{KwokYeung1997,LeungLamLingTam2003,IslamYaoMurase2003,Lehtokangas1999,IslamSattarAminYaoMurase2009,MaKhorasani2003,NarasimhaDelashmitManryLiMaldonado2008,HanQiao2013,KotaniKajikiAkazawa1997,AlvarezSalzmann2016}.
All these publications seek to grow and prune the neural network
architecture using some heuristic\ignore{ (e.g. genetic, information
  theoretic, or correlation)}. More recently, search-based approaches
have been an area of active research
\citep{HaDaiLe16,ChenGoodfellowShlens2015,ZophLe2016,BakerGuptaNaikRaskar2016}.
In this line of work, a learning meta-algorithm is used to search for
an efficient architecture.  Once better architecture is found
previously trained networks are discarded. This search requires a
significant amount of computational resources.  To the best of our
knowledge, none of these methods come with a theoretical guarantee on
their performance. Furthermore, optimization problems associated with
these methods are intractable.  In contrast, the structure learning
algorithms introduced in this paper are directly based on
data-dependent generalization bounds and aim to solve a convex
optimization problem by adaptively growing network and preserving
previously trained components.

Finally, \citep{JanzaminSedghiAnandkumar2015} is another paper that
analyzes the generalization and training of two-layer neural networks
through tensor methods. Our work uses different methods, applies to
arbitrary networks, and also learns a network structure from a single
input layer.

\section{Proofs}
\label{app:theory}

We will use the following structural learning guarantee for ensembles of hypotheses.

\begin{theorem}[DeepBoost Generalization Bound, Theorem~1, \citep{CortesMohriSyed2014}]
\label{lemma:db}
Let $\cH$ be a hypothesis set admitting a decomposition
$\cH = \cup_{i = 1}^l \cH_i$ for some $l > 1$. Fix $\rho > 0$. Then,
for any $\delta > 0$, with probability at least $1 - \delta$ over the
draw of a sample $S$ from $D^m$, the following inequality holds for
any $f = \sum_{t = 1}^T \alpha_t h_t$ with $\alpha_t \in \Rset_+$ and
$\sum_{t = 1}^T \alpha_t = 1$:
\begin{align*}
R(f) 
&\leq \widehat{R}_{S, \rho} + \frac{4}{\rho} \sum_{t=1}^T \alpha_t \mathfrak{R}_m(\mathcal{H}_{k_t}) + \frac{2}{\rho} \sqrt{\frac{\log l}{m}} \\
&\quad + \sqrt{\bigg\lceil \frac{4}{\rho^2} \log\left(\frac{\rho^2 m}{\log l}\right) \bigg\rceil \frac{\log l}{m} + \frac{\log(\frac{2}{\delta})}{2m}},  
\end{align*}
where, for each $h_t \in \cH$, $k_t$ denotes the smallest $k \in [l]$
such that $h_t \in \cH_{k_t}$.
\end{theorem}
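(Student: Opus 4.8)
The plan is to reduce the uncountable convex hull to a tractable family of \emph{unweighted} $N$-sums via a randomized approximation argument, in the spirit of the margin analysis of boosting, and then optimize over $N$. Fix an integer $N \geq 1$, to be chosen at the end. Given any $f = \sum_{t=1}^T \alpha_t h_t$ with $\alpha_t \geq 0$ and $\sum_t \alpha_t = 1$, I would form a random function $g = \frac{1}{N}\sum_{n=1}^N h_{i_n}$ by drawing indices $i_1,\dots,i_N$ i.i.d.\ from the distribution $(\alpha_1,\dots,\alpha_T)$. Then $\E[g(x)] = f(x)$ for every $x$, and since each $y\,h_{i_n}(x)\in[-1,1]$, Hoeffding's inequality controls, pointwise and over the sampling randomness, the chance that $y\,g(x)$ deviates from $y\,f(x)$ by more than $\rho/2$: this probability is at most $e^{-N\rho^2/8}$. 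Averaging these pointwise estimates over $(x,y)\sim\sD$ and over $S$ yields two one-sided inequalities, $R(f) \leq \E_g\big[\P_{(x,y)\sim\sD}[y\,g(x)\le \rho/2]\big] + e^{-N\rho^2/8}$ and $\E_g\big[\widehat R_{S,\rho/2}(g)\big] \leq \widehat R_{S,\rho}(f) + e^{-N\rho^2/8}$. This converts the claim into a margin bound, at the coarser margin $\rho/2$, for the family of $N$-sums.

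Next I would bound the complexity of the $N$-sum family by \emph{stratifying} it according to class membership. Each $N$-sum has a type $(N_1,\dots,N_l)$ recording how many of its $N$ summands lie in $\cH_k$ (using the smallest-index convention defining $k_t$), with $\sum_k N_k = N$. For a fixed type the corresponding subfamily is contained in $\tfrac1N$ times the Minkowski sum of $N_k$ copies of $\cH_k$ over all $k$, so by subadditivity of Rademacher complexity its complexity is at most $\sum_{k=1}^l \tfrac{N_k}{N}\,\R_m(\cH_k)$. I would then apply the standard Koltchinskii--Panchenko margin-based Rademacher generalization bound to each fixed type \emph{at margin $\rho/2$}, with its own confidence parameter, and union bound over all types. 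Applying the bound at the halved margin is exactly what produces the coefficient $\tfrac{2}{\rho/2}=\tfrac4\rho$ in front of the complexity term, and the $\log(2/\delta)$ comes from the (two-sided) concentration inside that base bound. Since every ordered assignment of the $N$ slots to the $l$ classes surjects onto the set of types, the number of types is at most $l^N$; allocating the confidence uniformly so the total failure probability is $\delta$ introduces an additive term of order $\sqrt{(N\log l + \log(2/\delta))/m}$.

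To move the complexity term from the realized type of $g$ back onto $f$, I would use concentration of the type: $\E[N_k/N] = w_k := \sum_{t:\,k_t=k}\alpha_t$, so in expectation over the sampling $\sum_k \tfrac{N_k}{N}\R_m(\cH_k) = \sum_k w_k\,\R_m(\cH_k) = \sum_{t=1}^T \alpha_t\,\R_m(\cH_{k_t})$, which is precisely the target term. Combining this with the two approximation inequalities of the first step gives, on the good event and for every $N$, a bound of the form
\[
R(f) \leq \widehat R_{S,\rho}(f) + \tfrac{4}{\rho}\textstyle\sum_{t=1}^T \alpha_t\,\R_m(\cH_{k_t}) + 2\,e^{-N\rho^2/8} + \sqrt{\tfrac{N\log l}{m} + \tfrac{\log(2/\delta)}{2m}}.
\]
Finally I would choose $N = \big\lceil \tfrac{4}{\rho^2}\log\!\big(\tfrac{\rho^2 m}{\log l}\big)\big\rceil$, which makes $2\,e^{-N\rho^2/8} \leq \tfrac{2}{\rho}\sqrt{\tfrac{\log l}{m}}$ and thereby reproduces the separate $\tfrac2\rho\sqrt{\log l/m}$ term, while the ceiling-valued $N$ inserted into $\sqrt{N\log l/m}$ yields exactly the stated expression inside the last square root.

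The main obstacle is the joint bookkeeping in the reduction: one must thread the expectation over the sampling randomness through both the pointwise Hoeffding estimates and the uniform-over-types generalization bound simultaneously, while guaranteeing that the complexity term ends up weighted by the hypothesis-dependent class weights $w_k$ (hence by the $\alpha_t$) rather than by a worst-case type. Keeping the two one-sided margin arguments, the $\rho/2$ margin rescaling, and the concentration of $N_k/N$ all mutually consistent is where the delicacy lies; the choice of $N$ and the residual concentration inequalities are then routine.
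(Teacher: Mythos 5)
The paper does not prove this statement itself---it is imported verbatim as Theorem~1 of \citet{CortesMohriSyed2014}---and your proposal is a correct reconstruction of that source's proof: the same randomized approximation of $f$ by $N$-sums at margin $\rho/2$ with one-sided Hoeffding error $e^{-N\rho^2/8}$, the same stratification of the $N$-sum family by type with an $l^N$ union bound over the Koltchinskii--Panchenko margin bound applied at margin $\rho/2$ (whence the $\frac{4}{\rho}$), the same passage to expectation over the sampling randomness (where, as a minor simplification of your wording, linearity of expectation alone suffices---since the uniform bound holds for every realized $g$ on the good event, no concentration of $N_k/N$ is required), and the same final choice $N = \big\lceil \frac{4}{\rho^2}\log\big(\frac{\rho^2 m}{\log l}\big)\big\rceil$. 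Your argument is correct and essentially identical to the original.
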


\begin{reptheorem}{th:adanet}
Fix $\rho > 0$.  Then, for any $\delta > 0$, with probability at least
$1 - \delta$ over the draw of a sample $S$ of size $m$ from $\sD^m$,
the following inequality holds for all
$f = \sum_{k = 1}^l \bw_k \cdot \bh_k \in \sF$:
\begin{align*}
R(f) 
& \leq \widehat{R}_{S, \rho}(f) + \frac{4}{\rho} \sum_{k = 1}^{l} \big \| \bw
  _k \big \|_1 \R_m(\tl \sH_k) + \frac{2}{\rho} \sqrt{\frac{\log l}{m}}\\
& \quad + C(\rho, l, m, \delta),   
\end{align*}
where
$C(\rho, l, m, \delta) = \sqrt{\big\lceil \frac{4}{\rho^2}
  \log(\frac{\rho^2 m}{\log l}) \big\rceil \frac{\log l}{m} +
  \frac{\log(\frac{2}{\delta})}{2m}} = \tl O\Big(\frac{1}{\rho}
\sqrt{\frac{\log l}{m}}\Big)$.
\end{reptheorem}

\begin{proof}
This result follows directly from Theorem~\ref{lemma:db}.
\end{proof}

Theorem~\ref{th:adanet} can be straightforwardly generalized to
the multi-class classification setting by using the ensemble
margin bounds of
\citet{KuznetsovMohriSyed2014}.

\begin{replemma}{lemma:Rad_Hk_Hk-1}
For any $k > 1$, the empirical Rademacher complexity of $\sH_k$ for a
sample $S$ of size $m$ can be upper-bounded as follows in terms of
those of $\sH_s$s with $s < k$:
\begin{equation*}
\h \R_S(\sH_k) 
\leq 2 \sum_{s=1}^{k-1} \Lambda_{k,s}
n_{s}^{\frac{1}{q}} \h \R_S(\sH_{s}).
\end{equation*}
\end{replemma}

\begin{proof}
By definition, $\h \R_S(\sH_k)$ can be expressed as follows:
\begin{equation*}
\h \R_S(\sH_k) 
  = \frac{1}{m} \E_\ssigma \mspace{-5mu} \left[ \sup_{\substack{\bh_s \in
      \sH_s^{n_s}\\\| \bu_s \|_p \leq \Lambda_{k, s}}} \mspace{-5mu} \sum_{i = 1}^m
  \sigma_i \sum_{s = 1}^{k - 1} \bu_s \cdot (\varphi_s \circ
  \bh_s)(x_i) \right] \mspace{-5mu}.
\end{equation*}
  By the sub-additivity of the supremum, it
  can be upper-bounded as follows:
\begin{equation*}
  \h \R_S(\sH_k) 
  \leq \sum_{s = 1}^{k - 1} \frac{1}{m} \E_\ssigma \mspace{-5mu} \left[ \sup_{\substack{\bh_s \in
      \sH_s^{n_s}\\\| \bu_s \|_p \leq \Lambda_{k, s}}} \sum_{i = 1}^m
  \sigma_i \bu_s \cdot (\varphi_s \circ \bh_s)(x_i) \right] \mspace{-5mu}.
\end{equation*}
  We now bound each term of this sum, starting with the following
  chain of equalities:
\begin{align*}
& \frac{1}{m} \E_\ssigma \mspace{-5mu} \left[ \sup_{\substack{\bh_s \in
      \sH_s^{n_s}\\\| \bu_s \|_p \leq \Lambda_{k, s}}} \sum_{i = 1}^m
  \sigma_i \bu_s \cdot (\varphi_s \circ \bh_s)(x_i) \right] \\
& = \frac{\Lambda_{k,s}}{m} \E_\ssigma\left[ \sup_{\bh_s \in
      \sH_s^{n_s}} \bigg\| \sum_{i = 1}^m
  \sigma_i (\varphi_s \circ \bh_s)(x_i) \bigg\|_q
  \right]  \\
& = \frac{\Lambda_{k,s} n_{s}^{\frac{1}{q}}}{m} \E_\ssigma\left[ \sup_{\substack{h \in
  \sH_{s}}} \bigg| \sum_{i = 1}^m \sigma_i (\varphi_{s} \circ h)(x_i) \bigg|
  \right] \\
  & = \frac{\Lambda_{k,s} n_{s}^{\frac{1}{q}} }{m} \E_\ssigma\left[ \sup_{\substack{h \in
  \sH_{s}\\ \sigma \in \set{-1, +1}}} \sigma \sum_{i = 1}^m \sigma_i (\varphi_{s} \circ h)(x_i) 
  \right],
\end{align*}
where the second equality holds by definition of the dual norm and the
third equality by the following equality:
\begin{align*}
\sup_{z_i \in Z} \| \bz \|_q 
& = \sup_{z_i \in Z} \Big[ \sum_{i = 1}^n |z_i|^q \Big]^{\frac{1}{q}}
= \Big[ \sum_{i = 1}^n [\sup_{z_i \in Z} |z_i|]^q \Big]^{\frac{1}{q}}\\
& = n^{\frac{1}{q}} \sup_{z_i \in Z} |z_i|.
\end{align*}
The following chain of inequalities concludes the
proof:
\begin{align*}
  &\frac{\Lambda_{k,s} n_{s}^{\frac{1}{q}} }{m} \E_\ssigma\left[ \sup_{\substack{h \in
  \sH_{s}\\ \sigma \in \set{-1, +1}}} \sigma \sum_{i = 1}^m \sigma_i (\varphi_{s} \circ h)(x_i) 
  \right] \\
          & \leq \frac{\Lambda_{k,s} n_{s}^{\frac{1}{q}} }{m} \E_\ssigma\left[ \sup_{\substack{h \in
  \sH_{s}}} \sum_{i = 1}^m \sigma_i (\varphi_{s} \circ h)(x_i) \right] \\
&\quad + \frac{\Lambda_{k,s}}{m} \E_\ssigma\left[ \sup_{\substack{h \in
  \sH_{s}}} \sum_{i = 1}^m -\sigma_i (\varphi_{j} \circ h)(x_i) 
  \right] \\
  & = \frac{2 \Lambda_{k,s} n_{s}^{\frac{1}{q}} }{m} \E_\ssigma\left[ \sup_{\substack{h \in
  \sH_{s}}} \sum_{i = 1}^m \sigma_i (\varphi_{s} \circ h)(x_i) \right] \\
  & \leq \frac{2 \Lambda_{k,s} n_{s}^{\frac{1}{q}} }{m} \E_\ssigma\left[ \sup_{\substack{h \in
  \sH_{s}}} \sum_{i = 1}^m \sigma_i h(x_i) \right] \\
                                                             & \leq 2 \Lambda_{k,s} n_{s}^{\frac{1}{q}} \h \R_S(\sH_{s}),
\end{align*}
where the second inequality holds by Talagrand's contraction lemma.
\end{proof}

\begin{replemma}{lemma:Rad_Hk}
Let $\Lambda_k \!= \prod_{s = 1}^k 2 \Lambda_{s, s - 1}$ and
$N_k \!= \prod_{s = 1}^k n_{s - 1}$. Then, for any $k \geq 1$, the
empirical Rademacher complexity of $\sH_k^*$ for a sample $S$ of size
$m$ can be upper bounded as follows:
\begin{equation*}
\h \R_S(\sH_k^*) 
\leq r_\infty \Lambda_k N_k^{\frac{1}{q}} 
\sqrt{\frac{\log (2 n_0)}{2 m}}.
\end{equation*}
\end{replemma}

\begin{proof}
The empirical Rademacher complexity of $\sH_1$ can be bounded as follows:
\begin{align*}
\h \R_S(\sH_1) 
& = \frac{1}{m} \E_\ssigma\left[ \sup_{\| \bu \|_p \leq \Lambda_{1,0}} \sum_{i = 1}^m \sigma_i \bu \cdot \bPsi(x_i) \right]\\
& = \frac{1}{m} \E_\ssigma\left[ \sup_{\| \bu \|_p \leq \Lambda_{1,0}}
  \bu \cdot \sum_{i = 1}^m \sigma_i \bPsi(x_i)  \right] \\
& = \frac{\Lambda_{1,0}}{m} \E_\ssigma\left[ \bigg\| \sum_{i = 1}^m \sigma_i
  [\bPsi(x_i)] \bigg\|_q \right]  \ignore{& (\text{def. of dual norm})}\\
& \leq \frac{\Lambda_{1,0} n_0^{\frac{1}{q}}}{m} \E_\ssigma\left[ \bigg\| \sum_{i = 1}^m \sigma_i
  [\bPsi(x_i)] \bigg\|_\infty \right]  \ignore{& (\text{equivalence of $l_p$ norms})}\\
& = \frac{\Lambda_{1,0} n_0^{\frac{1}{q}}}{m} \E_\ssigma\left[ \max_{j \in [1, n_1]} \bigg | \sum_{i = 1}^m \sigma_i
  [\bPsi(x_i)]_j \bigg | \right]  \ignore{& (\text{def. of $l_\infty$ norm})}\\
& = \frac{\Lambda_{1,0} n_0^{\frac{1}{q}}}{m} \E_\ssigma\left[ \max_{\substack{j \in [1,
  n_1]\\ s \in \set{-1, +1}}} \sum_{i = 1}^m \sigma_i
  s[\bPsi(x_i)]_j  \right]  \ignore{& (\text{def. of absolute value})}\\
& \leq \Lambda_{1,0}  n_0^{\frac{1}{q}} r_\infty \sqrt{m} \frac{\sqrt{2 \log (2 n_0)}}{m} \\
&= r_\infty \Lambda_{1,0} n_0^{\frac{1}{q}}\sqrt{\frac{2 \log (2 n_0)}{m}}. \ignore{& (\text{Massart's lemma})}
\end{align*}
The result then follows by application of Lemma~\ref{lemma:Rad_Hk_Hk-1}.
\end{proof}

\begin{repcorollary}{cor:adanet}
Fix $\rho > 0$.  Let
$\Lambda_k = \prod_{s = 1}^k 4 \Lambda_{s, s - 1}$ and
$N_k \!= \prod_{s = 1}^k n_{s - 1}$. Then, for any $\delta > 0$, with
probability at least $1 - \delta$ over the draw of a sample $S$ of
size $m$ from $\sD^m$, the following inequality holds for all
$f = \sum_{k = 1}^l \bw_k \cdot \bh_k \in \sF^*$:
\begin{align*}
R(f) 
& \leq \widehat{R}_{S, \rho}(f) + \frac{2}{\rho} \sum_{k = 1}^{l} \big \| \bw
  _k \big \|_1 \bigg[\ov r_\infty \Lambda_k N_k^{\frac{1}{q}} \sqrt{\frac{2 \log (2 n_0)}{m}}\bigg] \\
& \quad + \frac{2}{\rho} \sqrt{\frac{\log l}{m}} + C(\rho, l, m, \delta),   
\end{align*}
where $C(\rho, l, m, \delta) = \sqrt{\big\lceil \frac{4}{\rho^2}
  \log(\frac{\rho^2 m}{\log l}) \big\rceil \frac{\log l}{m} +
  \frac{\log(\frac{2}{\delta})}{2m}} = \tl O\Big(\frac{1}{\rho}
\sqrt{\frac{\log l}{m}}\Big)$, and where $\ov r_\infty = \E_{S \sim \sD^m}[r_\infty]$.
\end{repcorollary}

\begin{proof}
  Since $\sF^*$ is the convex hull of $\sH^*$, we can apply
  Theorem~\ref{th:adanet} with $\R_m(\tl \sH^*_k)$ instead of
  $\R_m(\tl \sH_k)$.  Observe that, since for any $k \in [l]$,
  $\tl \sH^*_k$ is the union of $\sH^*_k$ and its reflection, to
  derive a bound on $\R_m(\tl \sH^*_k)$ from a bound on
  $\R_m(\tl \sH_k)$ it suffices to double each $\Lambda_{s, s -
    1}$. Combining this observation with the bound of
  Lemma~\ref{lemma:Rad_Hk} completes the proof.
\end{proof}

\section{Alternative Algorithm}
\label{sec:alternatives}

In this section, we present an alternative algorithm, \textsc{AdaNet.CVX}, that 
generates candidate subnetworks in closed-form using Banach space duality.

As in Section~\ref{sec:algorithm}, let $f_{t-1}$ denote the \textsc{AdaNet}
model after $t-1$ rounds, and let $l_{t-1}$ be the depth of the architecture. 
\textsc{AdaNet.CVX} will consider $l_{t-1} + 1$ candidate subnetworks, one
for each layer in the model plus an additional one for extending the model.

Let $h^{(s)}$ denote the candidate subnetwork associated to layer $s \in [l_{t-1}+1]$. 
We define $h^{(s)}$ to be a single unit in layer $s$ that is connected
to units of $f_{t-1}$ in layer $s-1$:
\begin{align*}
  &h^{(s)} \in \{x \mapsto \bu \cdot (\varphi_{s-1} \circ \bh_{s-1,t-1})(x) : \\
  &\quad \bu \in \Rset^{n_{s-1,t-1}}, \|\bu\|_p \leq \Lambda_{s,s-1}\}.
\end{align*}
See Figure~\ref{fig:adanetcvx} for an illustration of the type of neural network
designed using these candidate subnetworks.

\begin{figure}[t] 
\centering
  \includegraphics[scale=.35]{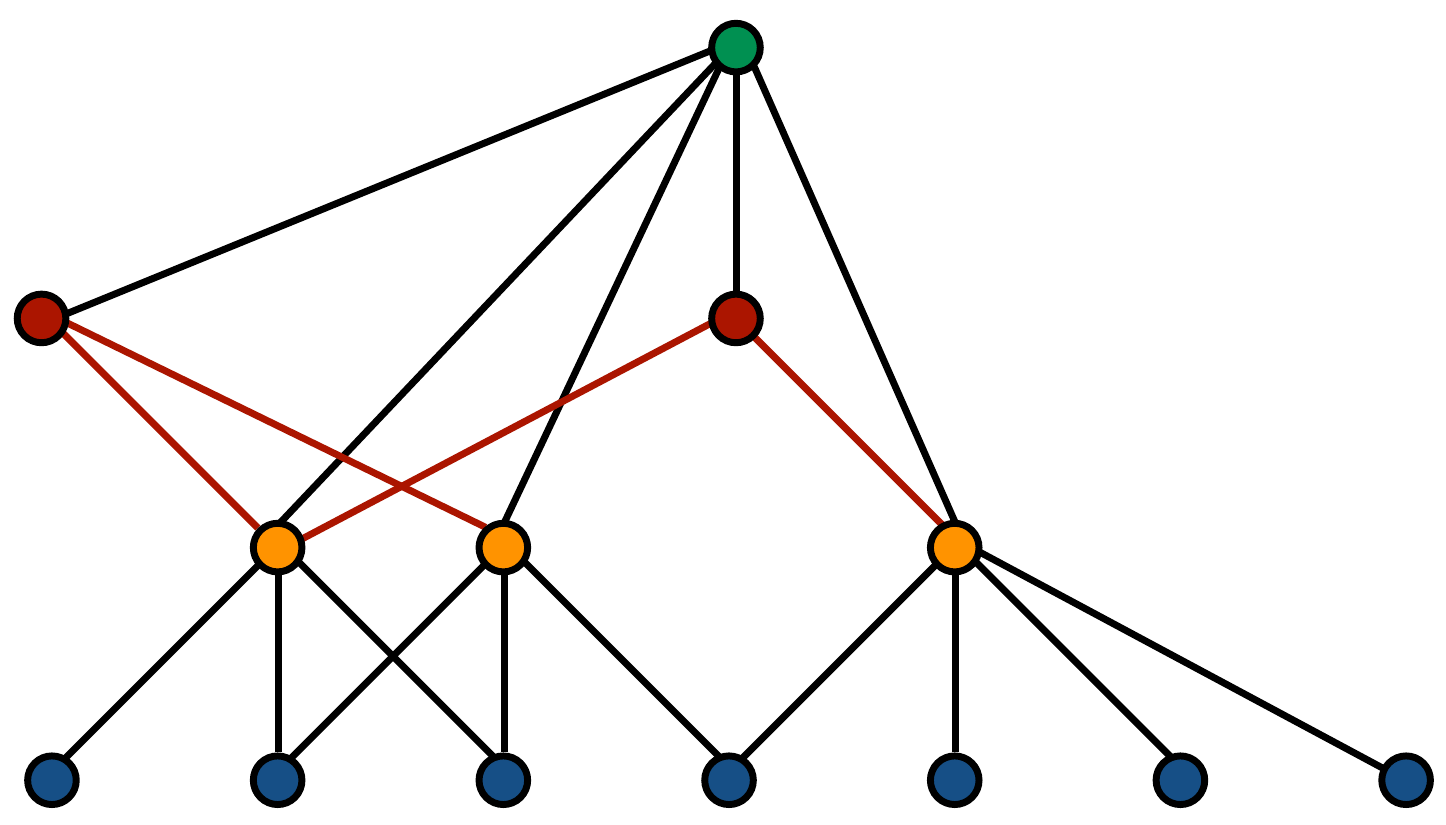}
  \caption{Illustration of a neural network designed by \textsc{AdaNet.CVX}.
  Units at each layer (other than the output layer) are only connected
  to units in the layer below.}
\vskip -.1in
\label{fig:adanetcvx}
\end{figure}

For convenience, we denote this space of subnetworks by $\sH_s'$: 
\begin{align*}
  &\sH_{s}' =  \{x \mapsto \bu \cdot (\varphi_{s-1} \circ \bh_{s-1,t-1})(x) : \\
  &\quad \bu \in \Rset^{n_{s-1,t-1}}, \|\bu\|_p \leq \Lambda_{s,s-1}\}.
\end{align*}

Now recall the notation 
\begin{align*}
  &F_t(w, h) \\
  &= \frac{1}{m} \sum_{i=1}^m \Phi\Big(1 - y_i (f_{t-1}(x_i) - w h(x_i))\Big) + \Gamma_{\bh} \|\bw\|_1
\end{align*}
used in Section~\ref{sec:algorithm}. As in \textsc{AdaNet},  
the candidate subnetwork chosen by \textsc{AdaNet.CVX} is given
by the following optimization problem:
\begin{align*}
  \argmin_{h \in \cup_{s=1}^{l_{t-1}+1} \sH_{s}'} \min_{w \in \Rset} F_t(w, h).
\end{align*}
Remarkably, the subnetwork that solves this infinite dimensional optimization 
problem can be obtained directly in closed-form: 

\begin{theorem}[\textsc{AdaNet.CVX} Optimization]
  \label{th:adanetcvx}
  Let $(w^*, h^*)$ be the solution to the following optimization problem:
\begin{align*}
  \argmin_{h \in \cup_{s=1}^{l_{t-1}} \sH_{s}'} \min_{w \in \Rset} F_t(w, h).
\end{align*}
  Let $D_t$ be a distribution over the sample $(x_i,y_i)_{i=1}^m$ such that $D_t(i) \propto \Phi'(1 - y_i f_{t-1}(x_i)$, and denote
  $\eps_{t,h} = \E_{i \sim D_t}[y_ih(x_i)]$. 
  
  Then, 
  $$w^* h^* = w^{(s^*)}h^{(s^*)},$$ 
  where $(w^{(s^*)}, h^{(s^*)})$ are defined by:
  \begin{align*}
    &s^* =  \argmax_{s\in[l_t-1]} \Lambda_{s,s-1} \|\be_{t,\bh_{s-1,t-1}}\|_q. \\
    &\bu^{(s^*)} = u^{(s)}_i = \frac{\Lambda_{s,s-1}}{\|\be_{t,\bh_{s-1,t-1}}\|_{q}^{\frac{q}{p}}} |\e_{t,h_{s-1,t-1,i}}|^{q-1} \\
    &h^{(s^*)} = \bu^{(s^*)} \cdot (\varphi_s \circ \bh_{s-1,t-1}) \\
    &w^{(s^*)} = \argmin_{w \in \Rset}\frac{1}{m} \sum_{i=1}^m \Phi\Big(1 - y_i f_{t-1}(x_i) \\ 
    &\quad - y_iw h^{(s^*)}(x_i)\Big) + \Gamma_{s^*} |w|.
  \end{align*}
\end{theorem}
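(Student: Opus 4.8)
The plan is to treat the selection of a candidate subnetwork as a single steepest-descent coordinate step and to exploit the fact that, once the layer $s$ is fixed, the objective $F_t(w,h)$ depends on the unit weights $\bu$ only through the linear functional $h=\bu\cdot(\varphi_{s-1}\circ\bh_{s-1,t-1})$. First I would fix a layer $s\in[l_{t-1}]$ and abbreviate $\phi_s=\varphi_{s-1}\circ\bh_{s-1,t-1}$, so that every candidate in $\sH_s'$ has the form $h_\bu=\bu\cdot\phi_s$ with $\|\bu\|_p\le\Lambda_{s,s-1}$, and the inner problem is $\min_w F_t(w,h_\bu)$ with the layer-constant penalty $\Gamma_s\,|w|$ (recall $\Gamma_\bh=\lambda r_\bh+\beta$ is the same for all $h\in\sH_s'$). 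Since $\Phi$ is convex and differentiable with $\Phi'$ never vanishing, the map $w\mapsto F_t(w,h_\bu)$ is convex and $D_t$ is well defined; its behaviour near $w=0$ is governed by $\partial_w F_t(0,h_\bu)=\frac1m\sum_i\Phi'(1-y_if_{t-1}(x_i))\,y_ih_\bu(x_i)$, which after normalising by $\sum_i\Phi'(1-y_if_{t-1}(x_i))$ is exactly proportional to the edge $\eps_{t,h_\bu}=\E_{i\sim D_t}[y_ih_\bu(x_i)]=\bu\cdot\be_{t,\bh_{s-1,t-1}}$.

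The key step is the Banach-space ($\ell_p$--$\ell_q$ Hölder) duality that converts the search over $\bu$ into closed form. Maximising $|\bu\cdot\be_{t,\bh_{s-1,t-1}}|$ over $\{\|\bu\|_p\le\Lambda_{s,s-1}\}$ gives the value $\Lambda_{s,s-1}\,\|\be_{t,\bh_{s-1,t-1}}\|_q$, attained in the equality case of Hölder at the dual vector whose $i$-th coordinate is proportional to $\sgn(\e_{t,h_{s-1,t-1,i}})\,|\e_{t,h_{s-1,t-1,i}}|^{q-1}$; normalising this vector to $\ell_p$-norm $\Lambda_{s,s-1}$, using the identity $(q-1)p=q$ to evaluate its norm, reproduces exactly the stated $\bu^{(s)}$ and the claimed per-layer optimal edge. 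Maximising $\Lambda_{s,s-1}\|\be_{t,\bh_{s-1,t-1}}\|_q$ across layers then yields $s^*$, and with $h^{(s^*)}=\bu^{(s^*)}\cdot(\varphi_{s}\circ\bh_{s-1,t-1})$ fixed, the outer weight is recovered by the one-dimensional convex line search defining $w^{(s^*)}$.

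The main obstacle is the passage from \emph{largest edge} to \emph{smallest} $\min_w F_t(w,h)$: maximising the first-order edge is only manifestly equivalent to minimising the fully optimised objective if the minimised one-dimensional surrogate $h\mapsto\min_w F_t(w,h)$ is controlled by $|\eps_{t,h}|$, whereas for a general convex $\Phi$ this minimised value can in principle depend on the full distribution of $\{h(x_i)\}_i$ and not on the edge alone. I would resolve this by combining the convexity of $F_t$ with the fact that, over directions sharing the same $\ell_p$-budget, the one aligned with the dual of $\be_{t,\bh_{s-1,t-1}}$ dominates in the relevant comparison, so that the directional-derivative bound it achieves is also the attainable optimum; this is the delicate point at which the closed-form claim, rather than merely the first-order weak-learner selection, must be justified. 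The remaining computations — the Hölder equality case, the norm identity $(q-1)p=q$, and the convex line search for $w^{(s^*)}$ — are routine.
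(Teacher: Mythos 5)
Your proposal follows essentially the same route as the paper's proof: exploit the fact that the penalty $\Gamma_h |w|$ is constant over each $\sH_s'$ to reduce the choice of $h$ to maximizing the derivative of $F_t$ with respect to $w$ at $w = 0$ (i.e.\ the edge $\eps_{t,h} = \bu \cdot \be_{t,\bh_{s-1,t-1}}$), obtain the maximizing $\bu^{(s)}$ in closed form via H\"older/$\ell_p$--$\ell_q$ duality with the identity $(q-1)p = q$, compare the attained values $\Lambda_{s,s-1} \|\be_{t,\bh_{s-1,t-1}}\|_q$ across layers to get $s^*$, and recover $w^{(s^*)}$ by a one-dimensional convex line search. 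The ``main obstacle'' you flag --- that maximizing the first-order edge is not manifestly equivalent to minimizing the fully optimized objective $h \mapsto \min_w F_t(w,h)$ --- is a genuine logical gap, but you should know that the paper's own proof does not close it either: it computes the same derivative at $w = 0$, applies the same duality argument, and then simply asserts that the resulting coordinate ``improves the objective by the largest amount,'' so your attempt matches the paper's argument step for step, including the step that both leave unjustified.
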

\begin{proof}
  By definition,
  \begin{align*}
    &F_t(w, h)\\ 
    &= \frac{1}{m} \sum_{i=1}^m \Phi\Big(1 - y_i (f_{t-1}(x_i) - w h(x_i))\Big) + \Gamma_{h} |w|.
  \end{align*}
   
  Notice that the  
  minimizer over $\cup_{s=1}^{l_{t-1}+1} \sH_{s}'$ can be determined by
  comparing the minimizers over each $\sH_s'$. 
  
  Moreover, since the penalty term $\Gamma_h |w|$ has the same contribution for every 
  $h \in \sH_{s}'$, it has no impact on the optimal choice of $h$ over $\sH_s'$. Thus, to find the minimizer 
  over each $\sH_{s}'$, we can compute the derivative of $F_t - \Gamma_h |w|$ with respect to
  $w$:
  \begin{align*}
    &\frac{d(F_t - \Gamma_h |\eta|)}{dw}(w, h) \\ 
    &= \frac{-1}{m} \sum_{i=1}^m y_i h(x_i) \Phi'\Big( 1 - y_i f_{t-1}(x_i)  \Big) .
  \end{align*}
  Now if we let 
  $$D_t(i) S_t = \Phi'\Big( 1 - y_i f_{t-1}(x_i) \Big),$$
  then this expression is equal to
  \begin{align*}
    &-\left[ \sum_{i=1}^m y_i h(x_i) D_t(i) \right] \frac{S_t}{m} 
    = (2\e_{t,h} - 1) \frac{S_t}{m},
  \end{align*}
  where $\e_{t,h} = \E_{i \sim D_t} [y_i h(x_i)]$.

  Thus, it follows that for any $s \in [l_{t-1} + 1]$,
  \begin{align*}
    \argmax_{h \in \sH_s'} \frac{d(F_t - \Gamma_h |w|)}{dw}(w, h)
    &= \argmax_{h \in \sH_s'} \e_{t,h}.
  \end{align*}

  Note that we still need to search for the optimal descent coordinate 
  over an infinite dimensional space. However, we can write 
  \begin{align*}
    &\max_{h \in \sH_s'} \be_{t,h } \\
    &= \max_{h \in \sH_s'}\E_{i \sim D_t} [y_i h(x_i)] \\
    &= \max_{\bu \in \Rset^{n_{s-1,t-1}}} \E_{i \sim D_t} [y_i \bu \cdot (\varphi_{s-1} \circ \bh_{s-1,t-1})(x_i)] \\
    &= \max_{\bu \in \Rset^{n_{s-1,t-1}}} \bu \cdot \E_{i \sim D_t} [y_i \cdot (\varphi_{s-1} \circ \bh_{s-1,t-1})(x_i)]. 
  \end{align*}

  Now, if we denote by $u^{(s)}$ the connection weights associated to $h^{(s)}$, then we claim that
  \begin{align*}
    u^{(s)}_i = \frac{\Lambda_{s,s-1}}{\|\be_{t,\bh_{s-1,t-1}}\|_{q}^{\frac{q}{p}}} |\e_{t,h_{s-1,t-1,i}}|^{q-1},
  \end{align*}
  which is a consequence of Banach space duality.

  To see this, note first that by H\"{o}lder's inequality, every $\bu \in \Rset^{n_{s-1,t-1}}$ with $\|\bu\|_p \leq \Lambda_{s,s-1}$ satisfies:
  \begin{align*}
    &\bu \cdot \E_{i \sim D_t} [y_i \cdot (\varphi_{s-1} \circ \bh_{s-1,t-1})(x_i)]\\
    &\leq \|\bu\|_p \|\E_{i \sim D_t} [y_i \cdot (\varphi_{s-1} \circ \bh_{s-1,t-1})(x_i)]\|_q\\
    &\leq \Lambda_{s,s-1} \|\E_{i \sim D_t} [y_i \cdot (\varphi_{s-1} \circ \bh_{s-1,t-1})(x_i)]\|_q.
  \end{align*}
  At the same time, our choice of $\bu^{(s)}$ also attains this upper bound:
  \begin{align*}
    &\bu^{(s)} \cdot \be_{t,\bh_{s-1,t-1}} \\
    &= \sum_{i=1}^{n_{s-1,t-1}} u^{(s)}_i \e_{t,h_{s-1,t-1,i}} \\
    &= \sum_{i=1}^{n_{s-1,t-1}} \frac{\Lambda_{s,s-1}}{\|\be_{t,\bh_{s-1,t-1}}\|_q^{\frac{q}{p}}} |\e_{t,h_{s-1,t-1,i}}|^q \\
    &= \frac{\Lambda_{s,s-1}}{\|\be_{t,\bh_{s-1,t-1}}\|_q^{\frac{q}{p}}}\|\be_{t,\bh_{s-1,t-1}}\|_q^q \\
    &= \Lambda_{s,s-1} \|\be_{t,\bh_{s-1,t-1}}\|_q.
  \end{align*}
  Thus, $\bu^{(s)}$ and the associated network $h^{(s)}$ is the coordinate that maximizes the derivative of $F_t$ with respect to $w$ among all
  subnetworks in $\sH_s'$. Moreover, $h^{(s)}$ also achieves the value: $\Lambda_{s,s-1} \|\be_{t,\bh_{s-1,t-1}}\|_q$. 

  This implies that by computing $\Lambda_{s,s-1} \|\be_{t,\bh_{s-1,t-1}}\|_q$ for every $s \in [l_{t-1}+1]$, we can find
  the descent coordinate across all $s \in [l_{t-1}+1]$ that improves the objective by the largest amount. Moreover, we can then
  solve for the optimal step size in this direction to compute the weight update. 

\end{proof}

The theorem above defines the choice of descent coordinate at each round
and motivates the following algorithm, \textsc{AdaNet.CVX}. At each round,
\textsc{AdaNet.CVX} can design the optimal candidate subnetwork within
its searched space in closed form, leading to an extremely efficient
update.
However, this comes at the cost of a more restrictive search space
than the one used in \textsc{AdaNet}.
The pseudocode of \textsc{AdaNet.CVX} is provided in Figure~\ref{algo:adanetcvx}. 

\begin{figure}[t]
  \begin{ALGO}{AdaNet.CVX}{S = ((x_i, y_i)_{i = 1}^m}
  \SET{f_0}{0}
\DOFOR{t \EQ 1 \TO T}
    \SET{s^*}{\argmax_{s\in[l_{t-1}+1]} \Lambda_{s,s-1} \|\be_{t,\bh_{s-1,t-1}}\|_q.}
    \SET{u^{(s^*)}_i}{\frac{\Lambda_{s^*,s^*-1}}{\|\be_{t,\bh_{s^*-1,t-1}}\|_{q}^{\frac{q}{p}}} |\e_{t,h_{s^*-1,t-1,i}}|^{q-1}} 
    \SET{\bh'}{\bu^{(s^*)} \cdot (\phi_{s^*-1} \circ \bh_{s^*-1,t-1})}
    \SET{\eta'}{\textsc{Minimize}(\tilde{F}_t(\eta, \bh'))} 
    \SET{f_t}{f_{t-1} + \eta' \cdot \bh'}
\OD
\RETURN{f_T}
\end{ALGO}
\vskip -.5cm
\caption{Pseudocode of the AdaNet.CVX algorithm.}
\vskip -.0cm
\label{algo:adanetcvx}
\end{figure}

\end{document}